\newcommand{\xmark}{\ding{55}}%
\newcommand\Label[1]{&\refstepcounter{equation}(\theequation)\cref@label{#1}&}
\Crefname{equation}{Eq.}{Eqs.}
\Crefname{figure}{Fig.}{Figs.}
\Crefname{tabular}{Tab.}{Tabs.}
\Crefname{appendix}{App.}{Apps.}
\newtheorem{proposition}{Proposition}
\newcommand\Bh{\bm{h}}
\newcommand\Bm{\bm{m}}
\newcommand\Bv{\bm{v}}
\newcommand\Bw{\bm{w}}
\newcommand\Bz{\bm{z}}
\newcommand\BA{\bm{A}}
\newcommand\BH{\bm{H}}
\newcommand\BP{\bm{P}}
\newcommand\BR{\bm{R}}
\newcommand\BV{\bm{V}}
\newcommand\Rx{\mathbf{x}}
\newcommand\Ry{\mathbf{y}}
\newcommand\Rz{\mathbf{z}}
\newcommand\Rt{\mathbf{t}}
\newcommand\RX{\mathbf{X}}
\newcommand\RZ{\mathbf{Z}}
\newcommand\RR{\mathbf{R}}
\newcommand\Bphi{\bm{\phi}}
\newcommand\Fcal{\mathcal{F}}
\icmltitlerunning{VN-EGNN: E(3)-Equivariant Graph Neural Networks with Virtual Nodes Enhance Protein Binding Site Identification}
\begin{document}

\twocolumn[
\icmltitle{VN-EGNN: E(3)-Equivariant Graph Neural Networks with Virtual Nodes Enhance Protein Binding Site Identification}

\icmlsetsymbol{equal}{*}

\begin{icmlauthorlist}
\icmlauthor{Florian Sestak}{iml}
\icmlauthor{Lisa Schneckenreiter}{iml}
\icmlauthor{Johannes Brandstetter}{iml,nxai}
\icmlauthor{Sepp Hochreiter}{iml,nxai}
\icmlauthor{Andreas Mayr}{iml}
\icmlauthor{Günter Klambauer}{iml}
\end{icmlauthorlist}

\icmlaffiliation{iml}{ELLIS Unit Linz, Institute for Machine Learning, JKU Linz, Austria}
\icmlaffiliation{nxai}{NXAI GmbH, Linz, Austria}

\icmlcorrespondingauthor{Florian Sestak}{sestak@ml.jku.at}

\icmlkeywords{Machine Learning, ICML}

\vskip 0.3in
]

\printAffiliationsAndNotice{}  %

\begin{abstract}
Being able to identify regions within or around proteins,
to which ligands can potentially bind, 
is an essential step to develop new drugs. 
Binding site identification methods can now profit from 
the availability of large amounts of 3D structures in protein structure 
databases or from AlphaFold predictions.
Current binding site identification methods heavily rely on 
graph neural networks (GNNs), 
usually designed to output E($3$)-%
equivariant predictions.
Such methods turned out to be very beneficial for physics-related 
tasks like binding energy or motion trajectory prediction. 
However, the performance of GNNs at binding site identification 
is still limited potentially
due to the lack of dedicated nodes 
that model hidden geometric entities, 
such as binding pockets. 
In this work, we extend E($n$)-Equivariant Graph Neural Networks (EGNNs) 
by adding virtual nodes and applying an extended message passing scheme. 
The virtual nodes in these graphs are dedicated quantities 
to learn representations of binding sites, 
which leads to improved predictive performance. 
In our experiments, 
we show that our proposed method 
VN-EGNN sets a new 
state-of-the-art 
at locating binding site centers on 
COACH420, HOLO4K and PDBbind2020.
\end{abstract}

\section{Introduction} %
\label{Introduction}

\textbf{Binding site identification remains a central computational problem in drug discovery.}
With the advent of AlphaFold \citep{Jumper2021}, millions of 3D structures of proteins have been unlocked for further 
investigation by the scientific community \citep{Tunyasuvunakool2021, Cheng2023}. 
Information about the 3D structure of a protein 
can provide crucial information about its function. One of the most important fields
that should profit from these 3D structures, is drug discovery \citep{Ren2023, Sadybekov2023}. 
It has been envisioned that
the availability of 3D structures will allow to purposefully design drugs that alter
protein function in a desired way. However, to enable structure-based drug design, further
computational approaches have to be utilized/employed, concretely either \emph{docking} or 
\emph{binding site identification} methods \citep{Lengauer1996, Cheng2007, Halgren2009}. 
While docking approaches predict the location of a specific
small molecule, called ligand, within a protein’s active site upon binding, binding
site identification aims at finding regions on the protein likely to form a
binding pocket and interact with unknown ligands \citep{Schmidtke2010}.
Note that docking and binding site identification are fundamentally 
different tasks in structure-based drug design: for the vast majority of proteins
no ligand is known, and binding site identification methods can provide valuable 
information for understanding protein function, guiding rational drug design or 
identifying a protein as a potential drug target.
For both approaches, deep learning methods, and specifically
geometric deep learning have brought significant 
advances \citep{Gainza2020, Sverrisson2021, Mendez2021geometric, Ganea2022, Stark2022,Lu2022,Corso2023}.

\begin{figure*}
    \centering
    \includegraphics[width=1.0\textwidth]{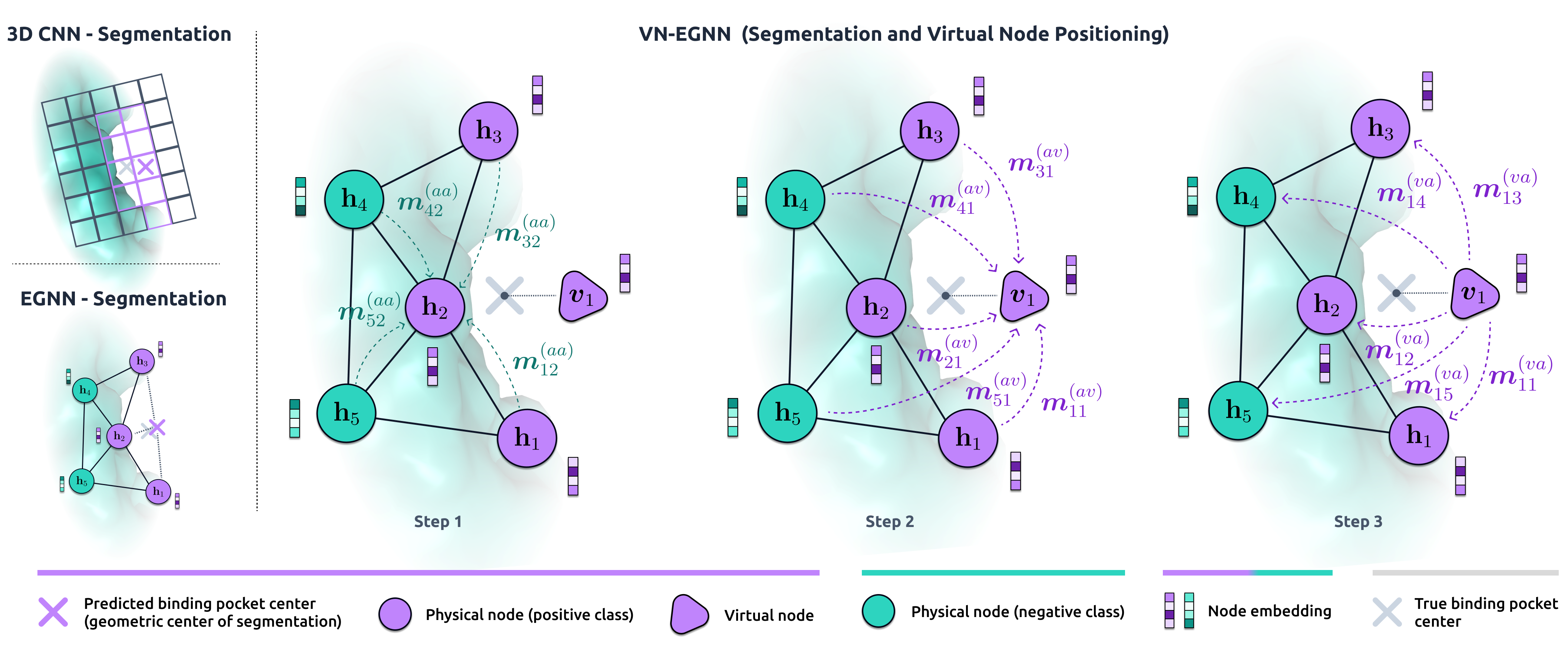}
    \caption{Overview of binding site identification methods. 
    \textbf{Top Left}: Traditional methods, based on segmentation of a voxel grid, in which 
    the pocket center is calculated as the geometric center of the positively labeled voxels. 
    \textbf{Bottom Left:} Geometric Deep Learning approaches, such as EGNN, 
    in which the pocket center is calculated as the geometric center of the positively labeled nodes. 
    \textbf{Right:} VN-EGNN approach (ours): 
    the predicted binding site center is the position of the virtual node after $L$ message passing layers.}
    \label{fig:overview}
\end{figure*}

\textbf{Methods for Binding Site Identification.} 
The identification of binding sites 
relies on the successful combination of physical, chemical and geometric information. 
Initially, machine learning methods for binding site prediction were based on carefully
designed input features due to their tabular processing structure. 
For instance, FPocket \citep{LeGuilloux2009} relies on Voronoi tessellation 
and alpha spheres \citep{Liang1998} and additionally takes an electronegativity
criterion into account. A random forest based method, 
which makes use of the protein surface, is P2Rank \citep{krivak2018p2rank}. 
With the advent of end-to-end deep learning and especially with the 
breakthrough of convolutional \citep{Lecun1998} and 
graph neural networks (GNNs) \citep{Scarselli2009, Defferrard2016, Kipf2017,  gilmer2017neural,Satorras2021}, 
the construction of input features can be learned which helped to advance predictive quality. 
For instance, DeepSite \citep{Jimenez2017} is a voxel-based 3D convolutional neural network for binding site prediction. 
Convolutional operations on the 3D space are, however, computationally very demanding
and so quickly other approaches to tackle binding site identification were developed, 
e.g., DeepSurf \citep{Mylonas2021} or PointSite \citep{Yan2022}. 
DeepSurf operates on surface-based representations and places several 
voxelized grids on the protein's surface, while PointSite is based on 
a form of sparse convolutions to reduce the computational overhead and 
keep sparse regions in the 3D space to be sparse. 
Typical convolutional networks, however, do not perform well at
binding site identification, likely because of the 
irregularity of protein structures and
due to the fact that proteins may be 
arbitrarily rotated and shifted in space \citep{zhang2023equipocket}. 
Thus, geometric deep learning approaches \citep{bronstein2021geometric}, most notably (graph-based) group equivariant 
architectures, such as EquiPocket \citep{zhang2023equipocket}, 
which are equivariant to the group of Euclidean transformations in the 3D space (E($3$)), should be powerful methods
for binding site identification. 

\textbf{E($3$)-equivariant Graph Neural Networks.}
We use graph neural networks (GNNs) that are robust to transformations of the Euclidean group, 
i.e., rotations, reflections, and translations, as well as to permutations.  
From a technical point of view, equivariance 
of a function $f$ to certain transformations means that 
for any transformation parameter $g$ and all inputs $x$ we have
$f(T_g(x)) = S_g(f(x)) \ ,\label{eq:equivariance_main}$
where $T_g$ and $S_g$ denote transformations on the input and output domain of $f$, respectively (see \Cref{eqBackground} for further information).
Equivariant operators applied to molecular graphs allow to preserve the geometric structure of the molecule. 
We build on E($n$)-equivariant GNNs (EGNNs) of~\citet{Satorras2021} applied to the three dimensional space
and the problem of binding site identification. 
In contrast to methods such as MACE~\citep{Batatia2022}, 
Nequip~\citep{Batzner2022}, PaiNN \citep{schutt2021equivariant} or SEGNN~\citep{brandstetter2021geometric}, 
EGNNs operate on scalar features, e.g., distances, and use scale operations for coordinate updates.
Thus, EGNNs operate efficiently~\citep{villar2021scalars} without resorting to compute-expensive higher order features, 
and, most importantly, allow for efficient coordinate update of virtual nodes.

\textbf{Limitations of GNNs and a mitigation strategy.}
Graph neural networks 
can suffer from limited 
expressiveness \citep{Morris2019,xu2019powerful}, 
oversmoothing \citep{Li2018, Rusch2023survey}, 
or oversquashing \citep{Alon2021, Topping2022}, which can lead to
unfavourable learning dynamics or weak predictive performance. 
To improve the learning dynamics
of GNNs, several works have introduced virtual nodes, 
sometimes called super-nodes or supersource-nodes, 
that are introduced into a message-passing scheme and 
connected to all other nodes. 
In a benchmark setting, \citet{Hu2020} showed that adding virtual nodes tends 
to increase the predictive performance. \citet{hwang2022an} provide a theoretical 
analysis of the benefits of virtual nodes in terms of expressiveness, 
demonstrate the increased expressiveness of GNNs with virtual nodes and 
also hint at the fact that such nodes can decrease oversmoothing. 
\citet{Alon2021} mention that virtual nodes might 
be used as a technique to overcome oversquashing effects. 
\citet{cai2023connection} and \citet{cai2023local} show that an MPNN 
with one virtual node, connected to all nodes, can approximate a Transformer layer. 
Low rank global attention \citep{puny2020graph} can be seen as one virtual node, 
which improves expressiveness.
Practically, virtual nodes have already been suggested
in the original work by \citet{gilmer2017neural}, and 
they were even  mentioned earlier in \citet{Scarselli2009} 
and used in application areas 
such as drug discovery \citep{li2017learning,pham2017graph,ishiguro2019graph}.
\citet{Joshi2023} investigated the expressive power of EGNNs in greater detail 
and argue that these networks can suffer from oversquashing. 
In order to alleviate the oversquashing problem of EGNNs for binding site identification, 
we suggest to extend EGNNs with virtual nodes 
and introduce an adapted message passing scheme. 
We refer to this method as Virtual-Node Equivariant GNN (VN-EGNN). 
To the best of our knowledge, VN-EGNN is the first 
E($3$)-equivariant GNN  architecture using virtual nodes (and thereby not relying on a priori knowledge).

\textbf{EGNNs with virtual nodes for binding site identification.}
In accordance with previous approaches, 
we consider binding site identification as a segmentation task. 
While other methods are, e.g., based on voxel grids, 
our method is based on EGNNs with virtual nodes, 
where all atoms or residues of the protein 
(the physical entities) 
are represented by physical, i.e., a non-virtual, nodes in the graph (see \Cref{fig:overview}, left). 
The objective is to correctly classify whether a node is 
within a certain radius of a region to which potential ligands can bind. 
Therefore, binding site identification can be considered as a 
node-level binary classification task and thus a semantic segmentation task. 
For this task, the ground truth
is whether an atom was within a 
certain radius to experimentally observed protein binding ligands.
In addition to node features, EGNNs act on coordinate features associated with each node, and both feature types are updated during message passing.
While it appears straightforward to associate physical nodes with the protein's atoms, 
it is a-priori unclear if the coordinate embeddings of virtual nodes are useful
for the task at hand. 
In an initial experiment, we trained VN-EGNN to learn a semantic segmentation task 
using multiple virtual nodes to which we assigned random coordinates. 
In an analysis of the results, we could empirically observe that coordinates of the 
virtual nodes converged towards the actual physical binding positions of 
ligands on the protein (see~\cref{appsec:intial} for further details and \Cref{fig:overview}, 
right, for a visualization).
The results of this initial experiment gave rise to the assumption that virtual nodes 
enable VN-EGNNs to form useful neural representations of binding sites 
and especially allow to predict locations of binding site centers. 
This, however, further implies that the binding site center itself may be a useful optimization target to train VN-EGNNs and, thus, we extended our objective 
to not only correctly predict whether physical nodes are close to binding regions, 
but also to directly take the distance between observed and predicted binding site centers 
into account. With this multi-modal objective, the coordinate embeddings of virtual nodes are trained to predict 
the locations of binding site centers. The remaining features of the virtual 
nodes are considered to form an abstract neural representation of a protein binding site.

\textbf{Contributions.}
In this work, we aim at improving binding site identification through geometric
deep learning methods. Here, we follow the approach of using EGNNs \citep{Satorras2021,zhang2023equipocket}
for identifying binding pockets. 
Although EGNNs are prime candidates for this task, they 
exhibit poor performance at binding site identification \citep{zhang2023equipocket},
which might be due to 
a) their lack of dedicated nodes that can learn representations of binding sites, and 
b) oversquashing effects which hamper learning \citep{Alon2021, Topping2022,Joshi2023}.
We aim at alleviating both problems by extending EGNNs with 
so-called virtual nodes. 
In this work, we contribute the following:\\
\textbf{a)} We propose a novel type of graph neural network geared towards the identification of binding sites of proteins.\\
\textbf{b)} We demonstrate that the virtual nodes in the message-passing scheme learn useful representations and accurate locations 
of binding pockets.\\
\textbf{c)} We assess the performance of other methods, baselines and our method on benchmarking datasets.

\section{E($3$)-Equivariant Graph Neural Networks with Virtual Nodes}
\label{Method}

\subsection{Notational Preliminaries}
\label{subsec:notation}

We give an overview on variable and symbol notation in \Cref{notation},
and a more detailed description and discussion on how we 
represent proteins and binding sites in \Cref{protein,bindingSite} 
respectively. 
To quickly summarize, the coordinates of 
the $i$-th \emph{physical node}, 
e.g., the location of an atom of a protein,
are denoted as $\Rx_i \in \mathbb{R}^3$, and 
its other node features as $\Bh_i \in \mathbb{R}^D$. $l$ added as an index to symbols will indicate neural network layers, but might be omitted for simplicity sometimes if it is clear from the context.
We will consider \emph{virtual nodes} and the $k$-th virtual 
node coordinates will be denoted as $\Bz_k$, while the other virtual node features will be denoted as $\Bv_k$.
We use upper-case bold letters to denote the matrices
collecting the coordinates and features of the $N$ 
physical and the $K$ virtual nodes, respectively:
$\BH^l:=(\Bh_1^l,\ldots,\Bh_N^l)$,
$\RX^l:=(\Rx_1^l,\ldots,\Rx_N^l)$,
$\BV^l:=(\Bv^ {l}_1,\ldots,\Bv^{l}_K)$,
$\RZ^l:=(\Rz^{l}_1,\ldots,\Rz^{l}_K)$.

We denote the graph, which VN-EGNN works upon as $\mathcal G$ and 
$\BA$ as the associated adjacency matrix. 
$\mathcal N(i)$ indicates neighbouring 
nodes to node $i$ within $\mathcal G$ 
and edge features between nodes $i$ and $j$ as $a_{ij}$ (for two non-virtual nodes) and $d_{ij}$ (if $j$ is virtual). 
At training time, we might have access to node-level labels
$y_i=1$ and $y_i=0$. 
We assume that in the training set we have access to 
a set of $M$ center coordinates $\{\Ry_m\}_{m=1}^M$
with $\Ry_m \in \mathbb{R}^3$, which the model should predict. 
We denote predicted node-level 
labels by $\hat y_i \in [0,1]$ and the set of
$K$ predicted center coordinates from the model
by $\{\hat \Ry_k\}_{k=1}^K$ with
$\hat \Ry_k \in \mathbb{R}^3$. 

\subsection{EGNNs and their application to 
Binding Site Identification}
\label{sec:egnn_bsi}
EGNNs are straightforward to apply to proteins, when they are 
represented by a neighborhood graph $\mathcal P$,
in which each node represents an atom
and edges between two atoms represent 
spatially close atoms (distance between the atoms in the protein is below some threshold). To apply EGNNs, we first set $\mathcal G=\mathcal P$. 
For binding pocket identification, one could 
predict node labels $y_i$, which indicate
whether the atom belongs to a binding pocket or not. 

The physical nodes represent atoms and their 
initial coordinate features are set to the location 
of the atoms $\Rx^0_i$,
and the initial node features $\Bh^0_i$
to, e.g., the atom or residue type.
Then we apply the layer-wise message passing scheme $\left(\RX^{l+1}, \BH^{l+1}\right) = 
            \mathrm{EGNN}\left( \RX^{l}, \BH^{l}, \BA \right)$ (\crefrange{eq:Std_message1}{eq:Std_update1}) as given by \citet{Satorras2021}:
\begin{align}
    &\Bm_{ij} = \Bphi_{e}(\Bh_i^l,\Bh_j^l,\lVert \Rx_i^l - \Rx_j^l \rVert^2, a_{ij}) \label{eq:Std_message1} \\
    &\Bm_i = \sum_{j\in \mathcal N(i)} \Bm_{ij} \label{eq:Std_sum1}\\  
    &\Rx_i^{l+1} = \Rx_i^l + \frac{1}{|\mathcal N(i)|} \sum_{j\in \mathcal N(i)} \frac{\Rx_i^l - \Rx_j^l}{\lVert \Rx_i^l - \Rx_j^l\rVert} \phi_{x}(\Bm_{ij}) \label{eq:Std_x_coord1} \\ 
    &\Bh_i^{l+1} = \Bphi_{h}\left(\Bh_i^l, \Bm_i\right),\label{eq:Std_update1} 
\end{align}
where $\Bphi_{e}$, $\phi_{\Rx}$ and $\Bphi_{h}$ 
denote multilayer-perceptrons (MLPs). 
To identify binding pockets, 
we can extract predictions $\hat y_i$ for
each atom $i$ by a read-out 
function applied to the last message passing step $L$, i.e., 
we have: $\hat y_i = \sigma ( \boldsymbol w^\top \Bh_i^L)$ with 
an activation function $\sigma$ and parameters $\Bw$. Our model does not 
incorporate edge features, symbolized by $a_{ij}$. Hence, we will exclude 
these from the subsequent EGNN formulations and their related derivations.

\subsection{VN-EGNN: Extension of EGNN with virtual nodes}
\label{subsec:vn_message_passing}
We now extend $\mathcal G$ (which is set to the protein neighborhood graph $\mathcal P$ for the task of protein binding site identification)
with a set of $K$ virtual nodes, 
which exhibit edges to all other nodes, which will allow
us to learn representations of hidden geometric entities, 
such as binding sites, and simultaneously ameliorate
oversquashing. To be able to process this extended graph,
we modify EGNNs by locating the virtual nodes 
at coordinates $\RZ=(\Rz_1,\ldots,\Rz_K) \in \mathbb{R}^3$ and 
associating them with 
a set of properties $\BV=(\Bv_1,\ldots,\Bv_K) \in \mathbb{R}^D$. 
The new message passing scheme $\left( \RX^{l+1},\BH^{l+1},\RZ^{l+1}, \BV^{l+1} \right) =  \;\;\mathrm{VN\text{-}EGNN}  \left(\RX^{l},\BH^{l},\RZ^{l}, \BV^{l},\BA \right)$
of a single VN-EGNN layer consists of three phases (\crefrange{eq:message1}{eq:update1}, \crefrange{eq:message2}{eq:update2}, and, \crefrange{eq:message3}{eq:update3}), in which the feature and coordinate embeddings of the physical nodes are updated twice:
\begin{align}
\Bh^l_i \rightarrow \Bh^{l+\sfrac{1}{2}}_i \rightarrow \Bh^{l+1}_i, \quad  \Rx^l_i \rightarrow \Rx^{l+\sfrac{1}{2}}_i \rightarrow \Rx^{l+1}_i \quad  \forall i 
\end{align}
while 
virtual node embeddings are only updated once per message passing step
\begin{align}
    \Bv^l_k \rightarrow \Bv^{l+1}_k, \quad  \Rz^l_k\rightarrow \Rz^{l+1}_k  \quad \forall k.
\end{align}

\textbf{Message Passing Phase I} between \textbf{physical nodes} (analogous to EGNN):
    \begin{align}
        &\Bm_{ij}^{(aa)} = \Bphi_{e^{(aa)}}(\Bh_i^l,\Bh_j^l,\lVert \Rx_i^l - \Rx_j^l \rVert, a_{ij}) \label{eq:message1} \\ 
        &\Bm_i^{(aa)} = \frac{1}{|\mathcal N(i)|} \sum_{j\in \mathcal N(i)} \Bm_{ij}^{(aa)} \label{eq:sum1}\\  
        &\Rx_i^{l+\sfrac{1}{2}} = \Rx_i^l + \frac{1}{|\mathcal N(i)|} \sum_{j\in \mathcal N(i)} \frac{\Rx_i^l - \Rx_j^l}{\lVert \Rx_i^l - \Rx_j^l\rVert} \phi_{x^{aa}}(\Bm_{ij}^{(aa)}) \label{eq:x_coord1}  \\
        &\Bh_i^{l+\sfrac{1}{2}} = \Bh_i^l + \Bphi_{h^{(aa)}}\left(\Bh_i^l, \Bm_i^{(aa)}\right). \label{eq:update1} 
    \end{align}

\textbf{Message Passing Phase II} from \textbf{physical nodes} to \textbf{virtual nodes}:
    \begin{align}
        &\Bm_{ij}^{(av)} = \Bphi_{e^{(av)}}(\Bh_i^{l+\sfrac{1}{2}}, \Bv_j^l, \lVert \Rx_i^{l+\sfrac{1}{2}} - \Rz_j^l \rVert, d_{ij}) \label{eq:message2}
        \\ &\Bm_j^{(av)} = \frac{1}{N} \sum_{i=1}^N \Bm_{ij}^{(av)} \label{eq:sum2} \\  
        &\Rz_j^{l+1} = \Rz_j^l + \frac{1}{N} \sum_{i=1}^N \frac{\Rx_i^{l+\sfrac{1}{2}} -\Rz_j^l}{\lVert \Rx_i^{l+\sfrac{1}{2}} - \Rz_j^l \rVert} \phi_{x^{av}}(\Bm_{ij}^{(av)})  \label{eq:z_coord}
        \\ &\Bv_j^{l+1} = \Bv_j^l + \Bphi_{h^{(av)}}\left(\Bv_j^l, \Bm_j^{(av)}\right) \label{eq:update2} 
    \end{align}

\textbf{Message Passing Phase III} from \textbf{virtual node} to \textbf{physical nodes}:
    \begin{align}
        &\Bm_{ij}^{(va)} = \Bphi_{e^{(va)}}(\Bv_i^{l+1},\Bh_j^{l+\sfrac{1}{2}},\lVert \Rz_i^{l+1} - \Rx_j^{l+\sfrac{1}{2}}\rVert, d_{ji}) \label{eq:message3} 
        \\ &\Bm_j^{(va)} = \frac{1}{K} \sum_{i=1}^K \Bm_{ij}^{(va)} \label{eq:sum3} \\ 
            &\Rx_j^{l+1} = \Rx_j^{l+\sfrac{1}{2}} + %
            \frac{1}{K} \sum_{i=1}^K \frac{\Rz_i^{l+1} - \Rx_j^{l+\sfrac{1}{2}}}{\lVert \Rz_i^{l+1} - \Rx_j^{l+\sfrac{1}{2}} \rVert} \phi_{x^{va}}(\Bm_{ij}^{(va)})
        \label{eq:x_coord2}
        \\  &\Bh_j^{l+1} = \Bh_j^{l+\sfrac{1}{2}} + \Bphi_{h^{(va)}}\left(\Bh_j^{l+\sfrac{1}{2}}, \Bm_j^{(va)}\right) \label{eq:update3} %
    \end{align}

Here,  $\Bphi_{e^{(aa)}}, \ldots, \Bphi_{h^{(va)}}$ are again MLPs.  
The MLPs $\phi_{.}$ are layer-specific, i.e. $\phi^l_{.}$ and 
currently do not consider edge features $d_{ij}$ and $a_{ij}$.
To keep the notation uncluttered, 
we skipped the layer index $l$ for the MLPs in the above formulae.

\textbf{Initialization of virtual nodes in VN-EGNN.}
The $K$ virtual nodes are initially evenly distributed across 
a sphere using a Fibonacci grid (see \Cref{appsec:fibonacci}), 
of which the radius 
is defined as the distance between the protein center and its most distant atom.
We initialize the virtual node properties 
$\Bv^0_k$ by averaging over the initial features $\Bh^0_i$. 

\subsection{Properties of VN-EGNN}
\label{sec:properties_vnegnn}
The following proposition shows, that analogously to EGNNs, 
VN-EGNNs are equivariant  
with respect to roto-translations and reflections by construction.

\begin{proposition}
    \label{prop:equivariance}
    E$(3)$-equivariant graph neural networks with virtual nodes as defined in 
    \crefrange{eq:message1}{eq:update3} are equivariant with respect 
    to roto-translations and reflections
    of the input and virtual node coordinates.
\end{proposition}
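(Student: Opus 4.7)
The plan is to prove \Cref{prop:equivariance} by straightforward induction on the layer index $l$, showing simultaneously that (i) for every orthogonal $\BQ\in\mathbb{R}^{3\times 3}$ and translation $\Bt\in\mathbb{R}^3$, the coordinates $\Rx_i^l$ and $\Rz_k^l$ transform as $\Rx_i^l \mapsto \BQ\Rx_i^l + \Bt$ and $\Rz_k^l \mapsto \BQ\Rz_k^l + \Bt$, and (ii) the feature vectors $\Bh_i^l$ and $\Bv_k^l$ are invariant under this transformation. The base case $l=0$ holds by construction of the initial embeddings, since $\Bh_i^0$ and $\Bv_k^0$ are defined from atom/residue types (which do not depend on coordinates) and the coordinates themselves are the ones being transformed.

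For the inductive step, the key observations are two elementary facts: squared (and plain) Euclidean distances of the form $\|\BQ\Ba + \Bt - \BQ\Bb - \Bt\|^2 = \|\BQ(\Ba - \Bb)\|^2 = \|\Ba - \Bb\|^2$ are invariant because $\BQ$ is orthogonal, and the corresponding normalized differences satisfy $(\BQ\Ba+\Bt - \BQ\Bb-\Bt)/\|\cdot\| = \BQ\,(\Ba-\Bb)/\|\Ba-\Bb\|$. Applying these to Phase I: every argument of $\Bphi_{e^{(aa)}}$ in \eqref{eq:message1} is invariant (features by the inductive hypothesis, distances by the first observation, and the scalar edge feature $a_{ij}$ does not depend on coordinates), hence $\Bm_{ij}^{(aa)}$ and $\Bm_i^{(aa)}$ are invariant. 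The update \eqref{eq:x_coord1} transforms as $\BQ\Rx_i^l + \Bt + \frac{1}{|\mathcal N(i)|}\sum_j \BQ\,(\Rx_i^l - \Rx_j^l)/\|\Rx_i^l-\Rx_j^l\|\,\phi_{x^{aa}}(\Bm_{ij}^{(aa)}) = \BQ\Rx_i^{l+\sfrac{1}{2}} + \Bt$, so the intermediate coordinates are equivariant. The feature update \eqref{eq:update1} is invariant because both its inputs are.

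Phases II and III proceed identically, the only new ingredient being that the arguments to $\Bphi_{e^{(av)}}$ and $\Bphi_{e^{(va)}}$ mix a physical coordinate with a virtual coordinate; but since both are translated by the same $\Bt$ and rotated by the same $\BQ$, their difference still transforms as $\BQ(\Rx_i - \Rz_j)$ and its norm is invariant, so the same argument as above applies. Consequently $\Rz_j^{l+1}$ in \eqref{eq:z_coord} and $\Rx_j^{l+1}$ in \eqref{eq:x_coord2} are equivariant, while $\Bv_j^{l+1}$ and $\Bh_j^{l+1}$ in \eqref{eq:update2} and \eqref{eq:update3} are invariant. Chaining the three phases completes the inductive step, and induction over all $L$ layers yields the claim.

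The only delicate point, and arguably the main bookkeeping obstacle, is to make sure the intermediate half-step quantities $\Rx_i^{l+\sfrac{1}{2}}$ and $\Bh_i^{l+\sfrac{1}{2}}$ produced by Phase I already satisfy the equivariance/invariance hypothesis before they are consumed by Phase II, and likewise for the virtual-node quantities updated in Phase II before they enter Phase III; this is purely a matter of applying the inductive hypothesis within a single layer in the correct order rather than across layers. Since translation invariance of features follows from using only coordinate differences and rotation equivariance of coordinate updates follows from the normalized-difference form of the update rule, no additional assumption on the MLPs $\Bphi_{e^{(\cdot)}}, \phi_{x^{(\cdot)}}, \Bphi_{h^{(\cdot)}}$ is needed; the proof is structural.
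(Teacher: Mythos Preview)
Your proposal is correct and follows essentially the same approach as the paper's own proof: both track the roto-translation through each of the three message-passing phases, using invariance of Euclidean distances to conclude invariance of the messages and feature updates, and using the normalized-difference form of the coordinate update to conclude equivariance of the new coordinates. Your explicit framing as an induction over layers (and the careful remark about the half-step intermediates) is slightly more formal than the paper's presentation, but the argument is structurally identical.
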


\begin{proof}
 See \Cref{appsec:proof}.
\end{proof}

\paragraph{Invariance with respect to the initial coordinates of the virtual nodes.} 
Note that \Cref{prop:equivariance} aims for equivariance with 
respect to rotations of the physical protein nodes $\RX$ and arbitrary, but fixed initialized virtual nodes $\RZ$ . We further want to have predictions, which are approximately invariant to differently chosen initial virtual node coordinates $\RZ^0$. This ultimately leads to predictions that are approximately equivariant with respect to E($3$)-transformations of the physical protein nodes.
In practice, we distribute the initial virtual node coordinates evenly on a sphere according to an algorithm, which constructs a spherical Fibonacci grid \citep{Swinbank2006}. The algorithm provides spherically distributed grid points, which are fixed at 
certain locations in the 3D space. In order to achieve invariance 
with respect to differently chosen initial virtual node 
coordinates, we randomly rotate this grid of initial virtual node 
coordinates for each sample in every epoch, i.e. there is 
variation in the relative alignment of the Fibonacci grid points, 
that represent the virtual node positions, to physical protein 
nodes. Empirically, we observe that this training strategy leads to approximate invariance to different initializations of the virtual node coordinates (see \Cref{tab:varRot}).

\paragraph{A potential alternative strategy for initialization 
virtual node coordinates.} 
An idea to avoid random alignments between physical node 
coordinates and initial virtual 
node coordinates, would be, to change initial coordinates in an 
equivariant way with respect to E($3$) 
group transformations of the protein 
physical nodes. This could be achieved, e.g., by defining frames 
\citep{puny2022frame} based on Principal Component Analysis of 
physical protein node coordinates and by aligning the Fibonacci 
grid relative to these frames. Consequently,
we would achieve that 
binding pocket predictions would change equivariantly with E($3$)-
transformations of the protein. Thereby the definition of such 
frames via Principal Component Analysis (PCA)
is possible up to certain degenerate cases, that 
occur with probability zero for proteins. Since the orientation 
of axes might still not be unique, a strategy might be to 
compute properties such as the overall molecular weight for each 
octant in the coordinate system spanned by PCA eigenvectors. The 
orientation can then be set, such that for the octant with the 
maximum overall molecular weight, all coordinates get positive 
values.

\textbf{Virtual nodes ameliorate oversquashing by 
bounding the maximal shortest-path distance between nodes and required message-passing steps.}
Several works \citep{Alon2021, Topping2022, di2023over} have 
investigated the relation between oversquashing and characteristics
of the MPNN layers and the adjacency matrix. According to
\citet{Topping2022} oversquashing is defined as 
$\frac{\partial h_i^{r+1}}{\partial h_j^{0}}$,
which is the effect of that one node with index $i$ has 
on a node with index $j$ during learning, where the nodes
are at a shortest-path distance of $r+1$.  Critically, this quantity
can be bounded by the model parameters of the involved
MLPs and the topology of the graph \citep{di2023over}, 
concretely the normalized adjacency matrix. 
We use \citet[Lemma~1]{Topping2022}, which states that
$\left| \frac{\partial h_i^{r+1}}{\partial h_j^{0}} \right| \leq (\alpha \beta)^{r+1} (\hat \BA^{r+1})_{ij}$
where $\alpha$ and $\beta$ are bounds on the element-wise gradients 
of the MLPs of the message-passing network,
$h_i^{r+1}$ is one component of the node representation 
of node $i$ in message passing layer $r+1$.
The quantity $r+1$ is both the number of message-passing layers and 
the shortest-path distance of nodes $i$ and $j$ 
in the graph, and $\hat \BA$ is the normalized 
adjacency matrix, for which the diagonal values of the original matrix 
is set to $1$. The normalized adjacency matrix $\hat \BA$
is a symmetric positive matrix that
has a leading eigenvalue at $1$ \citep{perron1907zur,frobenius1912matrizen}, 
such all other 
the eigenvectors of all other 
eigenvalues of $\hat \BA^r$ decay exponentially
with $r$. Depending on the weights and activation functions of
the MLPs, $|\alpha \beta|^{r+1}$ either grows or vanishes exponentially
with $r$, which might lead to either exploding or vanishing gradients, respectively. 
Thus, learning can only be stabilized via keeping $r$ stable, which
virtual nodes that are connected to all other nodes can provide
since they bound both the maximal path distance and the necessary
number of message-passing steps by $r+1=2$ .

\paragraph{Expressiveness of VN-EGNN.}
The expressive power of GNNs is linked to their ability to distinguish non-isomorphic graphs. While a minimum of $k$ layers of an EGNN is required to distinguish two $k$-hop distinct graphs, one layer of VN-EGNN is presumed to be sufficient, as can be shown by the application of the Geometric Weisfeiler-Leman test, which serves as an upper bound on the expressiveness of EGNNs. Experimental findings on $k$-chain geometric graphs support this proposition and demonstrate the increased expressive power of VN-EGNN compared to EGNNs without virtual nodes. For further details and an empirical study see~\cref{appsec:expressivity}.

\subsection{Training VN-EGNNs}
\textbf{Objective.}  
Previous methods, which consider binding site identification
as a node-level prediction task (see \cref{sec:egnn_bsi})
$\hat y_n = \sigma ( \boldsymbol w^\top \Bh_n^L)$, 
use a type of \emph{segmentation loss}. The segmentation loss
can be either the cross-entropy loss $\mathrm{CE}$:
\[\mathcal L_{\mathrm{segm}} = \frac{1}{N}\sum_{n=1}^N \mathrm{CE}(y_n,\hat y_n)\]
or the Dice loss, that is based on the continuous Dice coefficient \citep{Shamir2018}, with $\epsilon=1$:
\[\mathcal L_{\mathrm{dice}} \coloneqq 1 - \frac{2 \ \sum_{n=1}^N y_n \ \hat y_n + \epsilon}{\sum_{n=1}^N y_n + \sum_{n=1}^N \hat y_n + \epsilon}.\]

Our introduction of virtual nodes with coordinates allows to directly 
tackle the much more challenging problem to predict binding site region center 
points and to extract predictions for these points as outputs of the last EGNN layer.
For each protein in the training set, we know 
the geometric center of the binding site, that
we denote as $\{\mathbf{y}_1, \ldots, \mathbf{y}_M\}$.
The read-out $\hat \Ry_k$ for each virtual node
$\hat \Ry_k:=\Rz_k^L$ ($1 \leq i \leq K$)
is just the coordinate embedding $\Rz_k^L$ in 
the last layer $L$. Each known binding site center should be
detected by at least one virtual node, via its read-out, 
which leads to the following objective
\begin{align}
\mathcal L_{\mathrm{bsc}} = \frac{1}{M} \sum_{m=1}^M
\min_{k \in {1,\ldots,K}} \lVert \Ry_m - \hat \Ry_k \rVert^2.
\end{align}

The full objective of VN-EGNN for binding site
identification is 
\[\mathcal L=\mathcal L_{\mathrm{bsc}}+\mathcal L_{\mathrm{dice}},\]
in which the two terms could also be balanced against
each other through a hyperparameter, 
which we found was not necessary though.

\textbf{Self-Confidence Module.}
We employ a self-confidence module \citep{Jumper2021, zhang_e3bind_2023}, 
to assess the quality of predicted binding sites, by equipping each prediction with a confidence score. This allows a ranking of the predictions, similar to \citet{krivak2018p2rank}. The confidence value, indicated by $\hat{c}_k$, is computed through $\hat{c}_k = \psi(\Bv_k)$, with $\psi$ implemented as an MLP. 
During training, the target values for the confidence prediction are generated on-the-fly from the 
predicted positions $\hat \Ry_i$ and the closest known binding pocket
center $\Ry_m$, in analogy with confidence scores for object 
detection methods in computer vision.

The confidence label for the $k$-th virtual node is obtained defined by \citep{zhang_e3bind_2023}:
\begin{align}
    c_k = 
\begin{cases} 
1 - \frac{1}{2\gamma} \cdot \lVert \Ry_k - \hat \Ry_k \rVert & \text{if }\lVert \Ry_k - \hat{\Ry}_k \rVert \leq \gamma, \\
c_0 & \text{otherwise}
\end{cases},
\label{eq:confidence}
\end{align}
with $c_0=0.001$, 
To align with the commonly accepted threshold value 
for the DCC/DCA success rates of 4\r{A} we choose $\gamma = 4$. The loss on the confidence score is squared loss:
\begin{align}
    \mathcal{L}_{\text{confidence}} = \frac{1}{K}\sum_{k=1}^K (c_k-\hat{c}_k)^2.
\end{align}

\definecolor{light-gray}{gray}{0.0}

\begin{table*}[tb] 
\begin{adjustbox}{max width=\textwidth}

\begin{threeparttable}
        \setlength{\abovecaptionskip}{0cm} 
        \setlength{\belowcaptionskip}{0cm}
\caption{Performance at binding site identification in 
terms of DCC and DCA success rates.\tnote{a}
The first column provides the method, the second the 
number of parameters of the model, the fourth and the 
fifth column the performance on the COACH420 dataset, 
the sixth and seventh column the performance on the 
HOLO4K dataset, and the remaining columns the performance
on PDBbind 2020. The best performing method(s)
per column are marked bold. The second best in italics.}
\label{tab:results}
\centering
\begin{tabular}{l c c c c c c c c c c c c}
\toprule
 \multirow{2}{*}{Methods}       & Param  &  \multicolumn{2}{c}{COACH420} & \multicolumn{2}{c}{HOLO4K\tnote{d}} & \multicolumn{2}{c}{PDBbind2020}  \\    
 \cmidrule(r){3-4}  \cmidrule(r){5-6}  \cmidrule(r){7-8} 
                              &    (M)           &   DCC$\uparrow$          &     DCA$\uparrow$   &   DCC$\uparrow$      &   DCA$\uparrow$ &  DCC$\uparrow$   &  DCA$\uparrow$     \\     \midrule
Fpocket \citep{LeGuilloux2009}\tnote{b}          &  \textbackslash  &  0.228  & 0.444     & 0.192    &  0.457    &  0.253   & 0.371    \\
\color{light-gray} P2Rank \citep{krivak2018p2rank}\tnote{c}         & \color{light-gray}\textbackslash  &  \color{light-gray}\emph{0.464}  & \color{light-gray}\emph{0.728}   & \color{light-gray} \emph{0.474}   &   \color{light-gray}\textbf{0.787}     &  \color{light-gray}\emph{0.653}  &  \color{light-gray}\textbf{0.826}    \\ \midrule
DeepSite \citep{Jimenez2017}\tnote{b}            & 1.00   &   \textbackslash    &    0.564    &  \textbackslash   &  0.456    & \textbackslash     & \textbackslash      \\ 
Kalasanty \citep{stepniewska2020improving}\tnote{b}                                & 70.64  & 0.335   &  0.636      &  0.244   & 0.515    &  0.416   & 0.625   \\ 
DeepSurf \citep{Mylonas2021}\tnote{b}                                   & 33.06  & 0.386    & 0.658      &  0.289  &  0.635   &  0.510   & 0.708    \\ 
DeepPocket \citep{Aggarwal2022}\tnote{c}                                   & \textbackslash  & 0.399    & 0.645      &  0.456  &  0.734  &  0.644  &   \textbf{0.813}  \\
\midrule
GAT \citep{velivckovic2017graph}\tnote{b}         & \textbf{0.03} & 0.039(0.005)    & 0.130(0.009)   & 0.036(0.003)    & 0.110(0.010)    & 0.032(0.001)    & 0.088(0.011)     \\ 
GCN \citep{Kipf2017}\tnote{b}                    &  0.06  &  0.049(0.001)  & 0.139(0.010)       &  0.044(0.003)    & 0.174(0.003)    &   0.018(0.001) & 0.070(0.002)       \\
GAT + GCN\tnote{b}                    &  0.08   & 0.036(0.009)  &  0.131(0.021)     &  0.042(0.003)   & 0.152(0.020)     &  0.022(0.008)   &  0.074(0.007)      \\
GCN2 \citep{chen2020simple}\tnote{b}                                      & 0.11  & 0.042(0.098)   &  0.131(0.017)      & 0.051(0.004)    & 0.163(0.008)     & 0.023(0.007)    &  0.089(0.013)       \\ \midrule
SchNet \citep{schutt2017schnet}\tnote{b}        & 0.49    & 0.168(0.019) & 0.444(0.020)   &  0.192(0.005)    & 0.501(0.004)    &  0.263(0.003)    &  0.457(0.004)         \\ 
EGNN \citep{Satorras2021}\tnote{b}    &   0.41  & 0.156(0.017)   &  0.361(0.020)      &  0.127(0.005)   & 0.406(0.004)    & 0.143(0.007)    & 0.302(0.006)       \\ \midrule
EquiPocket \citep{zhang2023equipocket}\tnote{b}        & 1.70  & 0.423(0.014)  &  0.656(0.007)    & 0.337(0.006)  & \emph{0.662}(0.007)   &  0.545(0.010)  &  0.721(0.004)      \\ \midrule
VN-EGNN (ours)   & 1.20 & \textbf{0.605(0.009)} & \textbf{0.750(0.008)} & \textbf{0.532(0.021)} & 0.659(0.026) & \textbf{0.669(0.015)} & \textbf{0.820(0.010)}\\
\bottomrule

\end{tabular}

  \begin{tablenotes}
    \item[a] The standard deviation across training re-runs is indicated in parentheses.
    \item[b] Results from \citet{zhang2023equipocket}.
    \item[c] Uses different training set and, thus,
    limited comparability. 
    \item[d] This dataset represents a strong domain shift from the training data for all methods (except for P2Rank). Details on the domain shift 
    in~\cref{appsec:holo4k_artifacts}.
 \end{tablenotes}
\end{threeparttable}
\end{adjustbox}
\end{table*}

\section{Experiments}
\label{sec:experiments} \label{Experiments}

\subsection{Data}
We use the benchmarking setting of \citet{zhang2023equipocket} performing experiments on four datasets 
for binding site identification:
\textbf{scPDB} \citep{desaphy2015sc}, 
\textbf{PDBbind} \citep{2004The}, 
\textbf{COACH420} and \textbf{HOLO4K}.
For details, see~\cref{appsec:experimental_settings}.

\begin{table*}
\begin{adjustbox}{max width=\textwidth}
\begin{threeparttable}
        \setlength{\abovecaptionskip}{0cm} 
        \setlength{\belowcaptionskip}{0cm}
\caption{Ablation study. The main components of the VN-EGNN architecture are ablated and tested for their performance on 
the benchmarking datasets. The first column reports the 
variant of the ablated method, the second column whether 
the method contains virtual nodes (VN), the third column
whether the method applies heterogenous message-passing, 
and the fourth column whether ESM embeddings were used. The 
remaining columns are analogous to~\cref{tab:results}.
}
\label{tab:results_ablation}
\centering
\begin{tabular}{l c c c c c c c c c c c c c c}
\toprule
 \multirow{2}{*}{Methods}       & \multirow{2}{*}{VN } & heterog.  & \multirow{2}{*}{ESM}  &  \multicolumn{2}{c}{COACH420} & \multicolumn{2}{c}{HOLO4K} & \multicolumn{2}{c}{PDBbind2020}  \\    
 \cmidrule(r){5-6}  \cmidrule(r){7-8}  \cmidrule(r){9-10} 
                         & &  MP   &              &   DCC$\uparrow$          &     DCA$\uparrow$   &   DCC$\uparrow$      &   DCA$\uparrow$ &  DCC$\uparrow$   &  DCA$\uparrow$     \\     \midrule
EGNN \citep{Satorras2021}\tnote{b}    &   \xmark & \xmark &\xmark  & 0.156(0.017)   &  0.361(0.020)      &  0.127(0.005)   & 0.406(0.004)    & 0.143(0.007)    & 0.302(0.006)       \\ 
VN-EGNN (residue emb.)    & \checkmark & \checkmark & \xmark& 0.503(0.022) & 0.684(0.016) & 0.438(0.019) & 0.605(0.013) & 0.551(0.017) & 0.751(0.009)\\
VN-EGNN (homog.)    & \checkmark & \xmark & \xmark& 0.497(0.014) & 0.700(0.013) & 0.414(0.023) & 0.618(0.024) & 0.502(0.029) & 0.717(0.025)\\
VN-EGNN (homog.)    & \checkmark & \xmark & \checkmark& 0.575(0.008) & 0.708(0.009) & 0.479(0.012) & 0.595(0.010) & 0.649(0.010) & 0.805(0.006)\\
VN-EGNN (full)    & \checkmark & \checkmark & \checkmark& \textbf{0.605(0.009)} & \textbf{0.750(0.008)} & \textbf{0.532(0.021)} & \textbf{0.659(0.026)}   & \textbf{0.669(0.015)} & \textbf{0.820(0.010)}\\

\bottomrule

\end{tabular}

  \begin{tablenotes}
    \item[a] The standard deviation across training re-runs is indicated in parentheses.
    \item[b] Results from \citet{zhang2023equipocket}. 
 \end{tablenotes}
\end{threeparttable}
\end{adjustbox}
\end{table*}

\subsection{Evaluation}

\textbf{Methods compared.} We compare the 
following binding site identification methods
from different categories: 
\emph{Geometry-based}: 
Fpocket \citep{LeGuilloux2009} and P2Rank \citep{krivak2018p2rank}. %
\emph{CNN-based}: 
DeepSite \citep{Jimenez2017}, 
Kalasanty \citep{stepniewska2020improving}, 
DeepPocket \citep{aggarwal2021deeppocket}
and
DeepSurf \citep{Mylonas2021}.
\emph{Topological graph-based}: 
GAT \citep{velivckovic2017graph}, 
GCN \citep{Kipf2017}, and
GCN2 \citep{chen2020simple}.
\emph{Spatial graph-based}: 
SchNet \citep{schutt2017schnet}, 
EGNN \citep{Satorras2021}, 
EquiPocket \citep{zhang2023equipocket}, 
and our proposed VN-EGNN.

\textbf{Evaluation metrics.}
We used the \emph{DCC/DCA success rate}, which are well-established metrics for binding site identification \citep[see e.g.,][]{chen2011critical}.
The \emph{DCC} is defined as the distance between the predicted and known binding site centers, whereas the \emph{DCA} is defined as the shortest distance between the predicted 
center and any atom of the ligand. Following  \citet{stepniewska2020improving} and \citet{zhang2023equipocket}, predictions within a certain threshold of DCC and DCA, are considered as successful, which is commonly referred to as DCC/DCA success rate. Adhering to these works, we maintained a threshold of 4\AA\ throughout our experiments (for other thresholds, see~\cref{fig:dcc_thresholds}).
In line with \citet{chen2011critical,zhang2023equipocket,stepniewska2020improving} for each protein only $M$ predicted binding sites with  the highest self-confidence scores $\hat{c}_k$ are considered, where $M$ is the number of known binding sites of the protein. Subsequently, each predicted binding site was aligned with the closest real binding site and DCC/DCA success rate was calculated.

\subsection{Implementation details.}
{\footnote{Code is available at \url{https://github.com/ml-jku/vnegnn}}
\label{subsec:impl_details}
We used AdamW \citep{loshchilov2017decoupled} optimizer for 1500 epochs, selecting the best checkpoint based on the validation dataset. We used 5 VN-EGNN layers, where each layer consists of the three step message passing scheme described in \Cref{subsec:vn_message_passing}, the feature and message size was set to 100, in all layers. 
Due to the possibility of different virtual nodes converging to identical locations, we employed Mean Shift Algorithm \citep{comaniciu2002mean}, to cluster virtual nodes that are in close spatial proximity. By averaging their self-confidence scores and positions, we treated these clustered nodes as a single instance. Because of the large complexes in HOLO4K, 
we ran VN-EGNN for each chain and 
merged the predicted pocket centers.
For the initial residue node features we used pre-trained ESM-2 \citep{Lin2022.07.20.500902} protein embeddings following \citet{Corso2023, pei2023fabind}. For virtual nodes, we derived their features by averaging the residue node features across the entire protein. We used the position of the $\alpha$-carbons as residue node locations. Virtual nodes are connected to residue nodes solely, but not with each other. A linear layer was used to map these initial features to the required dimensions ($\Bh^0_n$, $\Bv_k^0$) of the model. Layer normalization and Dropout \citep{10.5555/2627435.2670313} was applied in each message passing layer. SiLU \cite{hendrycks2016gaussian} activation was used across all layers.  Analogous to \citet{pei2023fabind} we applied normalization (divided by 5) and unnormalization (multiplied by 5) on the coordinates and used Huber loss \citep{Huber1964RobustEO} for the coordinates, which empirically proved to be slightly more effective. The learning rate was set to $10^{-3}$, after 100 epochs we reduced the learning rate by factor of $10^{-1}$ if the model did not improve for 10 epochs. For training we used 4x NVIDIA A100 40GB GPU with batch size set to 64 on each GPU. The training time was about 8 hours. 
Hyperparameters were selected based on a validation dataset where we spilt 10\% of the training data (see \Cref{tab:hyperparams}).

\begin{figure*}
    \centering
    \includegraphics[width=0.90\columnwidth]{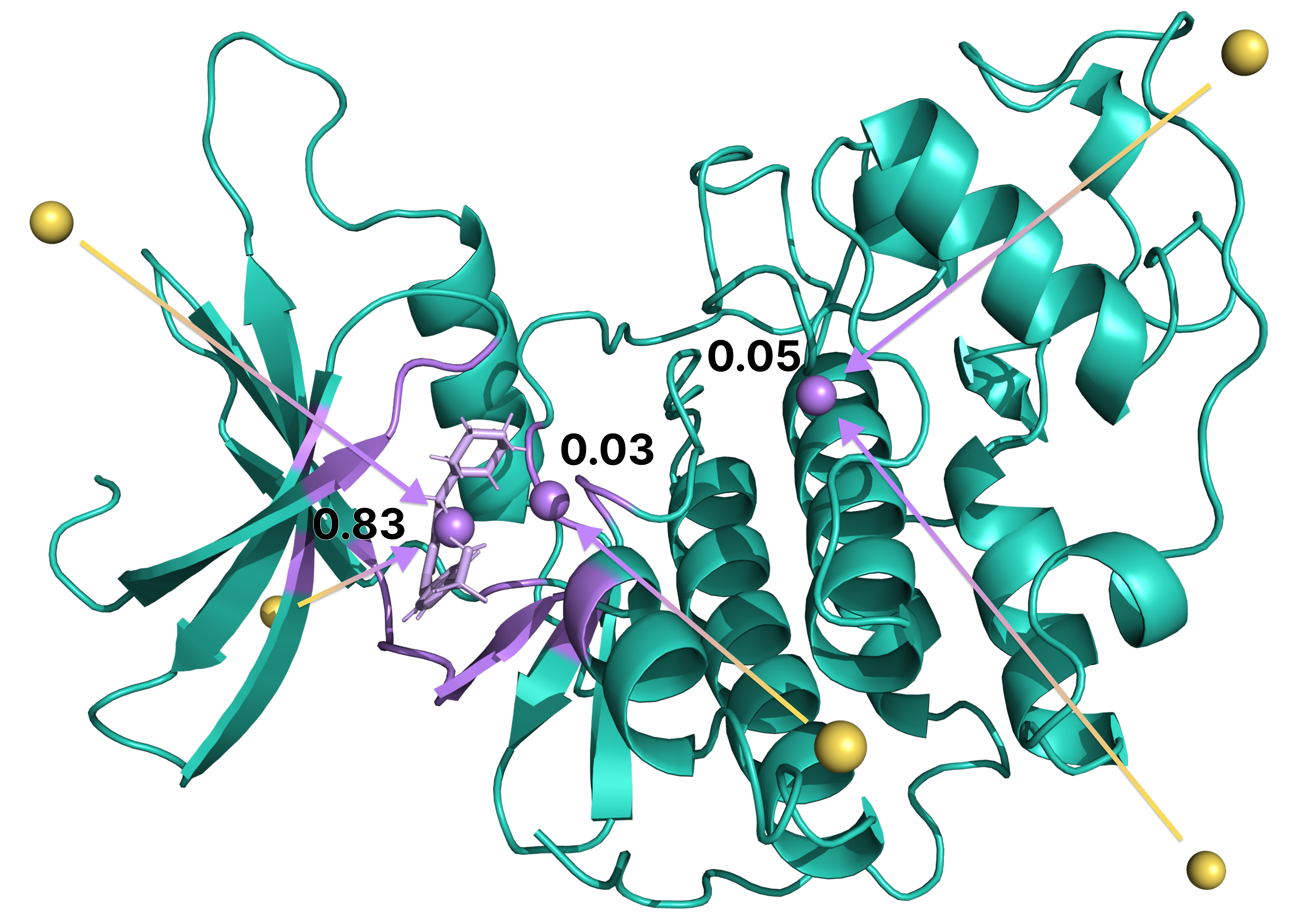}
    \hspace{15mm}
    \includegraphics[width=0.90\columnwidth]{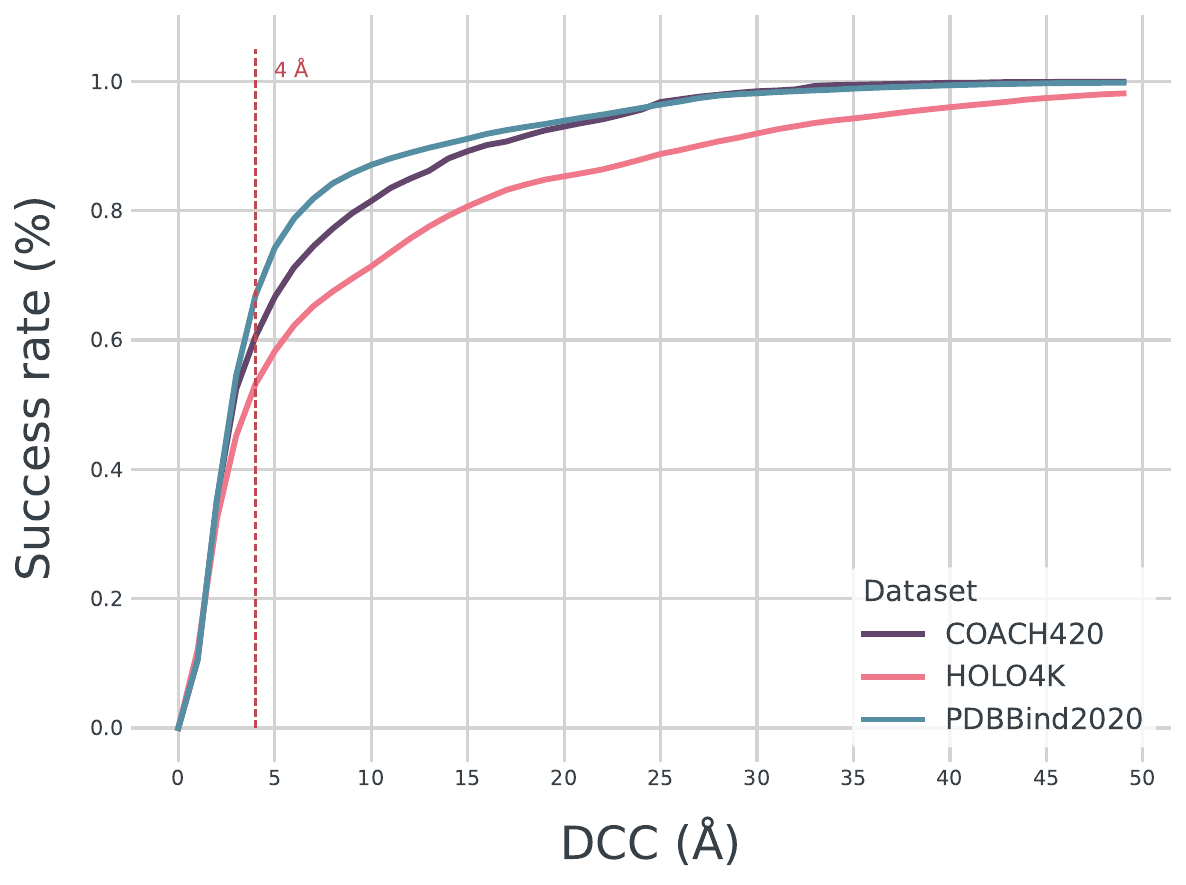}
    \caption{\textbf{Left:} Example of a prediction from our model: Initial positions of the virtual nodes are represented by the yellow spheres around the protein, the ground truth binding site is indicated by the light violet ligand, whereas violet regions on the protein represent the annotated binding site. The arrows indicate how the positions of the virtual nodes change from their initial positions. The violet spheres represent the clustered virtual node predictions with their associated self-confidence score. To simplify the visualization, not all initial positions of the virtual nodes are depicted in the figure.
    \textbf{Right:} DCC values for different thresholds. The x-axis
    denotes different thresholds for the distance of the predicted
    and known binding pocket center in \r{A}. Distances below 
    this threshold are considered as correctly found binding pockets.
    The y-axis denotes DCC success rate. 
    \label{fig:dcc_thresholds} \label{fig:example_1mxiA}}
\end{figure*}

\subsection{Results} 
Our experimental results demonstrate that our method VN-EGNN surpasses 
all prior approaches in terms of the DCC metric on COACH20, PDBbind2020
and even on the challenging HOLO4K dataset, see~\cref{tab:results}. 
On COACH20, VN-EGNN exhibits
the best DCA score and on PDBbind it yields the same DCA score as P2Rank.
Note that there is limited comparability with P2rank since this method uses
a different training set that might be closer to HOLO4K. HOLO4K contains many
complexes of symmetric proteins (see \cref{appsec:hyp}), which should be considered
as a strong domain shift to the training data and thus pose a problem for all methods
except P2rank.
For a more detailed discussion and a visual analysis we refer 
to \Cref{appsec:holo4k_artifacts}. Figure \ref{fig:example_1mxiA} shows
an example of our model predictions.

\subsection{Ablation study}
Our proposed method comprises three main components as compared
to typical other methods: 
(a) virtual nodes, 
(b) heterogenous message passing, and 
(c) pre-trained protein embeddings as node representations.
We ablate these three components in a set of experiments 
(see~\cref{tab:results_ablation}).
\emph{(a) Removing virtual nodes.} We compared our model to a standard EGNN framework to determine the added value of virtual nodes. An EGNN architecture, was also used in  \citet{zhang2023equipocket}, but significantly extension. In this study we compared how a standard EGNN performs to our method. \Cref{tab:results_ablation}, shows that the standard EGNN architecture did not perform well.
\emph{(b) Homogeneous message passing.} Our approach to message passing, which is applied in a sequential manner, was contrasted with the traditional method where updates across nodes occur in parallel or homogeneously. This evaluation was further enriched by employing identical MLPs for both graph and virtual nodes across all layers, providing a direct comparison of the impact of our message passing strategy.
\emph{(c) Atom type embeddings.}
We evaluated the impact of the type of embeddings, as outlined in \Cref{sec:egnn_bsi}. \Cref{tab:results_ablation} illustrates that, regardless of the initial embeddings used, our model surpasses all preceding approaches in achieving higher DCC success rate across the COACH420 and PDBbind2020 datasets, except P2Rank. This was accomplished by adopting a one-hot encoding scheme solely for the amino acid types, complemented by an additional category for the virtual nodes.

\section{Discussion and Conclusions.}
\textbf{Main findings.} We have introduced a 
novel method that extends EGNNs \citep{Satorras2021} with virtual 
nodes and a heterogeneous message-passing scheme. These new assets
improve the learning dynamics by ameliorating the oversquashing
problem and allow for learning representations of hidden geometric
entities. Concretely, we have developed this method for 
binding site identification, for which our experiments show that
VN-EGNN exhibits high predictive performance in terms of DCC 
and set a new state of the art on COACH420, HOLO4K and PDBbind2020.
We attribute our improvement largely to the direct prediction of binding site 
centers, rather than inferring them from the geometric center of segmented 
areas, a common practice in earlier methods. Relying on segmentation can lead to 
inaccuracies, especially if a single erroneous prediction impacts the calculated 
center. Overall, VN-EGNN yields highly accurate predictions of binding site centers.

\textbf{Comparison with previous work.} 
We again emphasize that we are proposing a method for binding site identification and not docking, 
both of which are fundamental, but distinct, tasks in 
computational chemistry. 
VN-EGNN is an equivariant graph neural network 
approach, similar to \citet{zhang2023equipocket}, 
however, with  several differences. 
Contrary to the majority of preceding methodologies, which predominantly 
utilized some form of surface representation \citep{1996Reduced, Eisenhaber1995}, 
our approach deviated by solely relying on residue 
level information. By only using residue-level information, i.e.
$\alpha$-carbons as physical nodes, our method is computationally much more efficient
both during training and inference because the input graphs are smaller. 
These results on using only alpha-carbons
underpins a finding by 
\citet{Jumper2021} that residue-level information implicitly 
contains all relevant side-chain conformations.

\textbf{Implications.} %
We expect that our empirical results and our 
new method re-new the interest in theoretically investigating the effect
of virtual nodes on the expressivity and the oversquashing problem of GNNs.
Practically, we envision that VN-EGNN will be a useful tool in molecular 
biology and structure-based drug design, which is regularly used to analyze 
proteins for potential binding pockets and their druggability. On the long
run, this could make the drug development process more time- and cost-efficient.

\textbf{Limitations.} Our method is currently limited to predicting
binding pockets of proteins similar to those in PDB. 
We expect that VN-EGNN
can also be applied to other physical or geometric problems with 
hidden geometric entities, such as particle flows, however, their performance
in these fields remains to be shown. Note that our method is not 
a docking method and thus cannot be used to dock ligands to protein structure.
However, our predicted binding sites can be used as proposal regions
for other methods, which could lead to improved 
performance and efficiency for docking methods. 

\textbf{Future work.} We aim at using VN-EGNN to annotate all proteins
in PDB with binding sites, and potentially a subset of the 200 million structures AlphaFold DB, 
with predicted binding pockets and release this annotated dataset.

\section*{Acknowledgements}
The ELLIS Unit Linz, the LIT AI Lab, the Institute for Machine Learning, are supported by the Federal State Upper Austria. We thank the projects AI-MOTION (LIT-2018-6-YOU-212), DeepFlood (LIT-2019-8-YOU-213), Medical Cognitive Computing Center (MC3), INCONTROL-RL (FFG-881064), PRIMAL (FFG-873979), S3AI (FFG-872172), DL for GranularFlow (FFG-871302), EPILEPSIA (FFG-892171), AIRI FG 9-N (FWF-36284, FWF-36235), AI4GreenHeatingGrids(FFG- 899943), INTEGRATE (FFG-892418), ELISE (H2020-ICT-2019-3 ID: 951847), Stars4Waters (HORIZON-CL6-2021-CLIMATE-01-01). We thank NXAI GmbH, Audi.JKU Deep Learning Center, TGW LOGISTICS GROUP GMBH, Silicon Austria Labs (SAL), FILL Gesellschaft mbH, Anyline GmbH, Google, ZF Friedrichshafen AG, Robert Bosch GmbH, UCB Biopharma SRL, Merck Healthcare KGaA, Verbund AG, GLS (Univ. Waterloo) Software Competence Center Hagenberg GmbH, T\"{U}V Austria, Frauscher Sensonic, TRUMPF and the NVIDIA Corporation. We acknowledge EuroHPC Joint Undertaking for awarding us access to Karolina at IT4Innovations, Czech Republic; MeluXina at LuxProvide, Luxembourg; LUMI at CSC, Finland.

\bibliography{newBib}
\bibliographystyle{icml2024}

\newpage

\appendix
\onecolumn 

\counterwithin{figure}{section}
\counterwithin{table}{section}
\counterwithin{equation}{section}
\renewcommand\thefigure{\thesection\arabic{figure}}
\renewcommand\thetable{\thesection\arabic{table}}
\renewcommand\theequation{\thesection.\arabic{equation}}

\thispagestyle{plain}

  {\center\baselineskip 18pt
                       \toptitlebar{\Large\bf Appendix: \\
                       VN-EGNN: E(3)-Equivariant Graph Neural Networks with Virtual Nodes Enhance Protein Binding Site Identification}\bottomtitlebar}

This appendix is structured as follows:
\begin{itemize}
\item We provide an overview table of the \textbf{notation} in~\cref{sec:notation}.
\item We then provide a more formal \textbf{problem statement} in~\cref{problem}.
\item For self-containedness, we give some \textbf{background on group theory}
and equivariance in~\cref{eqBackground}.
\item The proof of the \textbf{equivariance of VN-EGNNs} is given in~\cref{appsec:proof}.
\item Further details on the \textbf{experimental settings} are given in~\cref{appsec:expGeneral}. %
\item Details on the \textbf{initial experiment} can be found in~\cref{appsec:intial}.
\item An investigation of the approximate invariance with 
respect to initialization of the virtual node coordinates 
is proved in~\ref{sec:initial_coords}.
\item We included some \textbf{visualizations} in~\cref{appsec:visualization}.
\item Statistics on the differences between the datasets and potential \textbf{domain shifts and HOLO4K} are provided in~\cref{appsec:holo4k_artifacts}.
\item Finally, we included additional theory and experimental results on the \textbf{expressivity of VN-EGNN} in~\cref{appsec:expressivity}.
\end{itemize}

\clearpage

\section{Notation Overview}
\label{notation}\label{sec:notation}

\vspace{0.5cm}

\begin{table}[H]
\centering
\begin{threeparttable}

\begin{tabular}{l c l}
\toprule
Definition & Symbol/Notation & Type  \\ 
\midrule
number of physical nodes & $N$ & $\mathbb{N}$ \\
number of virtual nodes & $K$ & $\mathbb{N}_0$ \\
number of known binding pockets & $M$ & $\mathbb{N}_0$ \\
dimension of node features & $D$ & $\mathbb{N}$ \\
dimension of messages & $E$ & $\mathbb{N}$ \\
number of message passing layers/steps & $L$ & $\mathbb{N}$  \\
\midrule
node indices & $i,j,k,n$ & $\{1,...,K\}$ or $\{1,...,N\}$ \\
binding pocket index & $m$ & $\{1,...,M\}$ \\
layer/step index & $l$ & $\{1,...,L\}$ \\
index set of 10 nearest neighbor atoms & $\mathcal{N}(i)$ & $\{1,...,N\}^{10}$ \\
\midrule
physical node coordinates & $\Rx_i^l$ & $\mathbb{R}^3$ \\
virtual node coordinates & $\Rz_j^l$ & $\mathbb{R}^3$  \\
physical node feature representation & $\Bh_i^l$ & $\mathbb{R}^D$ \\
virtual node feature representation & $\Bv_j^l$ & $\mathbb{R}^D$ \\
edge feature between physical nodes & $a_{ij}$ & $\mathbb{R}$ \\
edge feature between physical and virtual node & $d_{ij}$ & $\mathbb{R}$ \\
\midrule
matrix of all physical node coordinates & $\RX$ & $\mathbb{R}^{3 \times N}$ \\
matrix of all virtual node coordinates & $\RZ$ & $\mathbb{R}^{3 \times K}$ \\
matrix of all physical node feature representations & $\BH$ & $\mathbb{R}^{D \times N}$ \\
matrix of all virtual node feature representations & $\BV$ & $\mathbb{R}^{D \times K}$ \\
\midrule
ground-truth node label & $y_n$ & \{0,1\} \\
predicted node label & $\hat y_n$ & $\left[0,1\right]$ \\
ground-truth binding site center & $\Ry_m$ & $\mathbb{R}^3$ \\
prediction of binding site center & $\hat\Ry_k$ & $\mathbb{R}^3$ \\
\midrule
messages* & $\Bm_{ij}^{(aa)} \!, \Bm_{ij}^{(av)} \!, \Bm_{ij}^{(va)}$ & $\mathbb{R}^E$\\
neural networks for message passing*: & & \\
$\quad$ message calculation & $\Bphi_{e^{(aa)}}, \Bphi_{e^{(av)}}, \Bphi_{e^{(va)}}$ & $ \mathbb{R}^D \! \times \! \mathbb{R}^D \! \times \! \mathbb{R} \! \times \! \mathbb{R} \rightarrow \mathbb{R}^E$ \\
$\quad$ coordinate update & $\phi_{\Rx^{(aa)}}, \phi_{\Rx^{(av)}}, \phi_{\Rx^{(va)}}$ \tnote{a}   & $\mathbb{R}^E \rightarrow \mathbb{R}$ \\
$\quad$ feature update & $\Bphi_{h^{(aa)}}, \Bphi_{h^{(av)}}, \Bphi_{h^{(va)}}$ & $\mathbb{R}^D \! \times \! \mathbb{R}^E \rightarrow \mathbb{R}^D$ \\
\midrule
segmentation loss & $\mathcal L_{\mathrm{segm}}$ & $\mathbb{R}^N \times \mathbb{R}^N \rightarrow \mathbb{R}$ \\
binding site center loss & $\mathcal L_{\mathrm{bsc}}$ & $\mathbb{R}^{3 \times M} \times \mathbb{R}^{3 \times K} \rightarrow \mathbb{R}$ \\

\bottomrule
 \end{tabular}
  \begin{tablenotes}
    \item \scriptsize{* The superscripts (aa), (av) and (va) represent the message passing direction (\textbf{atom} to \textbf{atom}, \textbf{atom} to \textbf{virtual node}, \textbf{virtual node} to \\
    \textbf{atom}).}
 \end{tablenotes}

\caption{Overview of used symbols and notations}
\label{tab:notation}

\end{threeparttable}

\end{table}

\clearpage

\section{Problem Statement}
\label{problem}

\subsection{Representation of proteins.}
\label{protein}

The 3D structure of a protein is usually given by some measurement 
of its atoms that form the primary amino acid sequence of the protein and the absolute coordinates for the atoms are given as 3D points $\Rx \in \mathbb{R}^3$.
The atoms themselves as well as the amino acids are characterized by 
their physical, chemical and biological properties. We assume that these
properties are summarized by feature vectors $\Bh \in \mathbb{R}^D$, which 
are located at the atom centers (either of all the atoms or only the ones
forming the protein backbone). We formally represent proteins by a neighborhood graph $\mathcal P=\left(\mathcal P_N, \mathcal P_E\right)$ with $N$ atom-property pairs, 
i.e. $\mathcal P_N=\{(\Rx_n,\Bh_n)\}_{n=1}^N$ with $\Rx_n \in \mathbb{R}^3$ and $\Bh_n \in \mathbb{R}^D$ and a set of directed 
edges $\mathcal P_E$ which consist of atom-property pairs $\left(i,j\right)$. Each node $i$ has incoming edges from
the 10 nearest nodes $j$ that are closer 
than 10\AA\ according 
to the Euclidean distance $\lVert \Rx_i - \Rx_j \rVert$.

\subsection{Representation of binding sites.}
\label{bindingSite}
Binding sites are regions around or within proteins, to which ligands can potentially bind. 
Basically, one can either describe binding sites \textit{explicitly} or \textit{implicitly}. 
In their explicit representation binding sites would be directly described 
by the location specifics of the regions, where ligands are located, 
especially by a region center point. In their implicit representation, 
binding sites would be described by the atoms of the protein, which 
surround the ligand. Atoms close to the ligand would be marked as 
binding site atoms. It might be worth mentioning, that several 
binding sites per protein are possible.

Formally, for the explicit representation, 
we describe the (experimentally observed) binding site center
points of $M$ distinct binding sites by $\Ry_m \in \mathbb{R}^3$
with $1\le m \le M$. For the implicit representation, we assign to 
each protein atom $n$ a label $y_n \in \{0, 1\}$, which is set to
$1$ if the atom center is within the threshold distance of 
observed binding ligands, and $0$ otherwise.

\subsection{Objective.}

From an abstract point of view,
we want to learn a predictive machine learning model 
$\Fcal$, parameterized by $\omega$, which maps proteins characterized by 
the positions of their atoms together with their properties to a binary 
prediction per atom, whether it might form a binding site and to $K$ 3D 
coordinates representing binding site region center points:

\begin{align} 
\text{\normalsize $\Fcal_\omega$}: \bigtimes_{n=1}^N \left( \underbrace{\mathbb{R}^3 \times \mathbb{R}^D}_{\substack{\text{protein 3D atom} \\ \text{coords with} \\ \text{$D$-dim features}}} \right) & \mapsto \underbrace{[0,1]^N}_{\substack{\text{sem. segm.} \\ \text{protein atoms} \\  \\ \\ \text{ \normalsize $\coloneqq \Fcal_\omega^{\mathrm{segm}}$}}} \times \underbrace{ \bigtimes_{k=1}^K \mathbb{R}^3}_{\substack{\text{pos. pred.} \\ \text{virt. nodes} \\ \\ \text{ \normalsize $\coloneqq \Fcal_\omega^{\mathrm{bsc}}$}}}\\
\text{\normalsize $\Fcal_\omega$}\left((\Rx_1, \Bh_1),\ldots, (\Rx_N, \Bh_N)\right)&=\left( (\hat y_1, \ldots, \hat y_N), (\hat \Ry_1, \ldots, \hat \Ry_K) \right) \nonumber \\
\text{\normalsize $\Fcal_\omega^{\mathrm{segm}}$}\left((\Rx_1, \Bh_1),\ldots, (\Rx_N, \Bh_N)\right)&\coloneqq\mathrm{proj}_1 \text{\normalsize $\Fcal_\omega$}\left((\Rx_1, \Bh_1),\ldots, (\Rx_N, \Bh_N)\right) \nonumber \\
\text{\normalsize $\Fcal_\omega^{\mathrm{bsc}}$}\left((\Rx_1, \Bh_1),\ldots, (\Rx_N, \Bh_N)\right)&\coloneqq\mathrm{proj}_2 \text{\normalsize $\Fcal_\omega$}\left((\Rx_1, \Bh_1),\ldots, (\Rx_N, \Bh_N)\right) \nonumber,
\end{align}
where $\mathrm{proj}_i$ is a projection, that gives the $i$-th component (i.e., prediction of the semantic segmenation part or coordinate predicitons or virtual nodes). Note, that for our predictive model, we use a fixed number $K$ of binding point centers, while indeed $M$ might have been observed for a specific protein.

\subsection{Utilized Loss Functions.}
\label{lossFnct}

\textbf{Segmentation loss.} For semantic segmentation (i.e., the prediction of $\Fcal_\omega^{\text{segm}}$), we use a Dice loss, that is based on the continuous Dice coefficient \citep{Shamir2018}, with $\epsilon=1$:
\begin{align}
\mathcal L_{\mathrm{dice}}=\mathrm{Dice} \left( (y_1, \ldots, y_N) , (\hat y_1, \ldots, \hat y_N) \right) \coloneqq 1 - \frac{2 \ \sum_{n=1}^N y_n \ \hat y_n + \epsilon}{\sum_{n=1}^N y_n + \sum_{n=1}^N \hat y_n + \epsilon}
\end{align}
Perfect predictions lead to a Dice loss of $0$, while perfectly wrong predictions would lead to a Dice of $1$ (in case $\epsilon=0$ and the denominator is $>0$).

\textbf{Binding site center loss.} For prediction of the binding site region center points (i.e., the prediction of $\Fcal_\omega^{\text{bsc}}$), we use the (squared) Euclidean distance between the set of predicted points and the set of observed ones. More specifically, we assume to be given $M$ observed center points $\{\Ry_1,\ldots,\Ry_M\}$. Each of the binding site center points should be detected by at least one of the $K$ outputs from $\Fcal_\omega^{\text{bsc}}$, which translates to using the minimum squared distance to any predicted center point for any of the observed center points:

\begin{align}
\mathcal L_{\mathrm{bsc}}=\mathrm{Dist} \left( \{\Ry_1, \ldots, \Ry_M\} , \{\hat \Ry_1, \ldots, \hat \Ry_K\} \right) \coloneqq \frac{1}{M} \sum_{m=1}^M
\min_{k \in {1,\ldots,K}} \lVert \Ry_m - \hat \Ry_k \rVert^2.
\end{align}

Our optimization objective is then the sum of the $\mathrm{Dice}$ and the $\mathrm{Dist}$ loss: 
\begin{align}
\alpha \mathrm{Dice} \left( (y_1, \ldots, y_N) , (\hat y_1, \ldots, \hat y_N) \right)+\mathrm{Dist} \left( \{\Ry_1, \ldots, \Ry_M\} , \{\hat \Ry_1, \ldots, \hat \Ry_K\} \right)
\end{align}
with the hyperparameter $\alpha=1$.

\clearpage

\section{Background on Group Theory and Equivariance}
\label{eqBackground}

A group in the mathematical sense is a set $G$ along with a binary operation $\circ: G \times G \rightarrow G$ with the following properties:

\begin{itemize}
\item \textit{Associativity}: The group operation is associative, i.e. $(g \circ h) \circ k = g \circ (h \circ k)$ for all $g, h, k \in G$.
\item \textit{Identity:} There exists a unique identity element $e \in G$, such that $e \circ g = g \circ e = g$ for all $g \in G$.
\item \textit{Inverse:} For each $g \in G$ there is a unique inverse element $g^{-1} \in G$, such that $g \circ g^{-1} = g^{-1}  \circ g= e$. 
\item \textit{Closure:} For each $g, h \in G$ their combination $g \circ h$ is also an element of $G$.
\end{itemize}

A group action of group $G$ on a set $X$ is defined as a set of mappings $T_g: X \rightarrow X$ which associate each element $g \in G$ with a transformation on $X$, whereby the identity element $e \in G$ leaves $X$ unchanged ($T_e(x) = x \quad \forall x \in X$). 

An example is the group of translations $\mathbb{T}$ on $\mathbb{R}^n$ with group action $T_t(x) = \Rx+\Rt \quad \forall \Rx,\Rt \in \mathbb{R}^n$, which shifts points in $\mathbb{R}^n$ by a vector $\Rt$.

A function $f: X \rightarrow Y$ is equivariant to group $G$ with group action $T_g$ if there exists an equivalent group action $S_g: Y \rightarrow Y$ on $G$ such that $$f(T_g(x)) = S_g(f(x)) \quad \forall x \in X, g \in G.$$ 

For example, a function $f: \mathbb{R}^n \rightarrow \mathbb{R}^n$ is translation-equivariant if a translation of an input vector $\Rx \in  \mathbb{R}^n$ by $\Rt \in  \mathbb{R}^n$ leads to the same transformation of the output vector $f(x) \in \mathbb{R}^n$, i.e. $f(\Rx+\Rt) = f(\Rx) +\Rt$.

Equivariant graph neural networks (EGNNs) $\psi$ as defined by \citet{Satorras2021} exhibit three types of equivariances:
\begin{enumerate}
\item \textit{Translation equivariance:} EGNNs are equivariant to column-wise addition of a vector $\Rt \in \mathbb{R}^n$ to all points in a point cloud $\RX \in \mathbb{R}^{n \times N}$: $\psi(\RX+\Rt) = \psi(\RX)+\Rt$.

\item \textit{Rotation and reflection equivariance}: Rotation or reflection of all points in the point cloud by multiplication with an orthogonal matrix $\RR \in \mathbb{R}^{n \times n}$ leads to an equivalent rotation of the output coordinates: $\psi(\RR \RX) = \RR \psi(\RX)$.

The group spanning all translations, rotations and reflections in $\mathbb{R}^n$ is called Euclidean group, denoted E($n$), as it preserves Euclidean distances. A proof for E(n)-equivariance of VN-EGNN can be found in  \Cref{appsec:proof}.

\item \textit{Permutation equivariance:} The numbering of elements in a point cloud or graph nodes does not influence the output, i.e. multiplication with a permutation matrix $\BP \in \mathbb{R}^{N \times N}$ leads to the same permutation of output nodes: $\psi(\RX\BP) = \psi(\RX)\BP$. This property holds for message passing graph neural networks in general, as they aggregate and update node information based on local neighborhood structure, regardless of the order in which nodes are presented.
\end{enumerate}

\clearpage

\section{Equivariance of VN-EGNN}
\label{appsec:proof}
In this section we show that the equivariance property of EGNN \citep{Satorras2021} extends to VN-EGNN, i.e., that rotation and reflection by an orthogonal matrix $\RR \in \mathbb{R}^{3 \times 3}$, and translation by a vector $\Rt \in \mathbb{R}^3$ of atom and virtual node coordinates leads to an equivalent transformation of output coordinates while leaving node features invariant when applying the message passing steps of VN-EGNN.

\setcounter{proposition}{0}
\begin{proposition} \label{eqpropFormal} (more formal)
    E($3$) equivariant graph neural networks  with virtual nodes ($\mathrm{VN\text{-}EGNN}$) as defined by the message passing scheme 
    $\left( \RX^{l+1},\BH^{l+1},\RZ^{l+1}, \BV^{l+1} \right) = \mathrm{VN\text{-}EGNN}  \left(\RX^{l},\BH^{l},\RZ^{l}, \BV^{l},\BA \right)$ in 
    \crefrange{eq:message1}{eq:update3} are equivariant with respect 
    to reflections and roto-translations of the input and virtual node coordinates, i.e., the following holds (equivariance to reflections and roto-translations):
    \begin{align}
       \left(\RR\RX^{l+1}+\Rt, \BH^{l+1}, \RR \RZ^{l+1} + \Rt, \BV^{l+1}\right) = \mathrm{VN\text{-}EGNN}\left(\RR\RX^l+\Rt, \BH^l, \RR\RZ^l+\Rt,  \BV^l\right), \label{eq:inv_equiv}
    \end{align}
    where the addition $\RX^l+\Rt$ is defined as column-wise addition of
    the vector $\Rt$ to the matrix $\RX$.
\end{proposition}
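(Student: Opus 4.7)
The plan is to proceed phase by phase, mirroring the proof of equivariance for vanilla EGNN given by Satorras et al.\ and chaining three applications of it. Throughout, I will show two things simultaneously: (i) all feature updates ($\BH^{l+1/2}$, $\BV^{l+1}$, $\BH^{l+1}$) are \emph{invariant} under the joint transformation $(\RX^l,\RZ^l)\mapsto(\RR\RX^l+\Rt,\RR\RZ^l+\Rt)$, and (ii) all coordinate updates ($\RX^{l+1/2}$, $\RZ^{l+1}$, $\RX^{l+1}$) are \emph{equivariant}, i.e.\ transform as $\Bx\mapsto\RR\Bx+\Rt$.

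The key observations I would rely on are the two elementary facts used by Satorras et al.: for any orthogonal $\RR$ and any $\Rt$,
\begin{align}
\lVert (\RR\Bu+\Rt)-(\RR\Bv+\Rt)\rVert &= \lVert \Bu-\Bv\rVert, \\
(\RR\Bu+\Rt)-(\RR\Bv+\Rt) &= \RR(\Bu-\Bv).
\end{align}
Hence every squared-distance argument appearing in the message functions $\Bphi_{e^{(\cdot)}}$ is invariant, so messages $\Bm_{ij}^{(aa)},\Bm_{ij}^{(av)},\Bm_{ij}^{(va)}$ are unchanged; and every coordinate update has the schematic shape $\Bx_i^{\mathrm{new}} = \Bx_i^{\mathrm{old}} + \sum_j \frac{\Bx_i^{\mathrm{old}}-\Bx_j^{\mathrm{old}}}{\lVert\cdot\rVert}\,\phi(\Bm_{ij})$, where the scalar $\phi(\Bm_{ij})$ is invariant and the displacement vector transforms by $\RR$, so the whole update transforms as $\RR\Bx_i^{\mathrm{new}}+\Rt$.

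Concretely, I would carry out the proof in three steps. \textbf{Step 1} (Phase I, atom-atom): feed the transformed inputs $(\RR\RX^l+\Rt,\BH^l)$ into \cref{eq:message1,eq:sum1,eq:x_coord1,eq:update1}; by the distance-invariance fact, $\Bm_{ij}^{(aa)}$ and hence $\Bm_i^{(aa)}$ are unchanged, so by \cref{eq:update1} $\BH^{l+1/2}$ is invariant; by the difference-covariance fact and invariance of the scalar $\phi_{x^{aa}}(\Bm_{ij}^{(aa)})$, the coordinate update in \cref{eq:x_coord1} yields $\RR\Rx_i^{l+1/2}+\Rt$. \textbf{Step 2} (Phase II, atom-to-virtual): using the Step 1 conclusions together with the transformed virtual-node coordinates $\RR\Rz_j^l+\Rt$, the same two facts applied to $\lVert\Rx_i^{l+1/2}-\Rz_j^l\rVert$ and $\Rx_i^{l+1/2}-\Rz_j^l$ yield invariance of $\Bm_{ij}^{(av)}$ and $\BV^{l+1}$ and the equivariance $\Rz_j^{l+1}\mapsto\RR\Rz_j^{l+1}+\Rt$. \textbf{Step 3} (Phase III, virtual-to-atom): identical reasoning applied to \crefrange{eq:message3}{eq:update3}, now using the Step 2 outputs, gives invariance of $\BH^{l+1}$ and equivariance of $\RX^{l+1}$. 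Combining these three equalities is exactly \cref{eq:inv_equiv}.

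The ``hard'' part is really just bookkeeping: one must be careful that before invoking equivariance of Phase II the output of Phase I has already been rewritten in transformed form, and likewise for Phase III with respect to Phase II, because the later phases consume intermediate coordinates. Since each phase only combines (a) previously-established-invariant features, (b) pairwise distances between previously-established-equivariant coordinates, and (c) coordinate differences of those same vectors, no new mechanism is needed beyond the two displayed facts above. Hence a single composition argument suffices, and one VN-EGNN layer is equivariant; iterating over $l=0,\dots,L-1$ extends the result to the full network.
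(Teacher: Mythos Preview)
Your proposal is correct and follows essentially the same approach as the paper's proof: both proceed phase by phase through \crefrange{eq:message1}{eq:update3}, using the invariance of pairwise distances and the $\RR$-covariance of coordinate differences to establish invariance of the messages/features and equivariance of the coordinate updates, while carefully chaining the intermediate outputs from one phase into the next. The only cosmetic difference is that the paper writes out the distance-invariance and coordinate-equivariance computations explicitly (see \cref{distance_inv} and \cref{eq:equivariance}), whereas you abstract them into the two displayed identities and invoke them uniformly across all three phases.
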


\begin{proof}
We use the notation from \Cref{subsec:notation} and proceed by tracking the propagation of node roto-translations through the VN-EGNN network. First, we want to show invariance in \cref{eq:message1} in phase I of message passing, equivalently to \citet{Satorras2021}, i.e.:
\begin{align}
   \Bm_{ij}^{(aa)} = \Bphi_{e^{(aa)}}(\Bh_i^l,\Bh_j^l,\lVert \RR\Rx_i^l + \Rt - \lbrack \RR\Rx_j^l + \Rt\rbrack \rVert^2, a_{ij}) = \Bphi_{e^{(aa)}}(\Bh_i^l,\Bh_j^l,\lVert \Rx_i^l - \Rx_j^l \rVert^2, a_{ij})
\end{align}
Assuming the initial node features $\Bh^0_i$ and edge representations $a_{ij}$ do not encode information about the original coordinates $\Rx^0_i$, it remains to be shown that the Euclidean distance between two nodes is also invariant to translation and rotation:
\begin{align}
    \begin{split}
       \lVert \RR\Rx_i^l + \Rt - \lbrack \RR\Rx_j^l + \Rt\rbrack \rVert^2 &= \lVert \RR\Rx_i^l - \RR\Rx_j^l \rVert^2 \\
       &= (\Rx_i^l - \Rx_j^l)^\top \RR^\top \RR (\Rx_i^l - \Rx_j^l) \\
       &= (\Rx_i^l - \Rx_j^l)^\top \mathbf{I} (\Rx_i^l - \Rx_j^l) \\
       &= \lVert \Rx_i^l - \Rx_j^l \rVert^2  \label{distance_inv}
    \end{split}
\end{align}
Consequently, the sum over messages (\cref{eq:sum1}) and the feature update function (\cref{eq:update1}), which only uses the summed messages and previous node features as input, are invariant as well, leaving the intermediate output feature representations $\Bh^{l+\sfrac{1}{2}}_i$ independent of coordinate transformations.

For the remaining equation (\cref{eq:x_coord1}) of phase I the equivariance property can be shown as follows, where \cref{distance_inv} is used in the first equality:

\begin{align}
    \begin{split}
        \RR\Rx_i^l+\Rt + \frac{1}{\mathcal N(i)} \sum_{j\in \mathcal N(i)} &\frac{\RR\Rx_i^l+\Rt - \lbrack \RR\Rx_j^l+\Rt \rbrack}{\lVert \RR\Rx_i^l+\Rt - \lbrack \RR\Rx_j^l+\Rt \rbrack \rVert} \phi_{x^{aa}}(\Bm_{ij}^{(aa)}) \\
        &= \RR\Rx_i^l+\Rt + \frac{1}{\mathcal N(i)} \sum_{j\in \mathcal N(i)} \frac{\RR\Rx_i^l+\Rt - \lbrack \RR\Rx_j^l+\Rt \rbrack}{\lVert \Rx_i^l - \Rx_j^l \rVert} \phi_{x^{aa}}(\Bm_{ij}^{(aa)}) \\
        &= \RR\Rx_i^l+\Rt + \frac{1}{|\mathcal N(i)|} \;\; \RR\!\! \sum_{j\in \mathcal N(i), j \neq i} \frac{\Rx_i^l - \Rx_j^l}{\lVert \Rx_i^l - \Rx_j^l \rVert} \phi_{x^{aa}}(\Bm_{ij}^{(aa)}) \\
        &= \RR \left(\Rx_i^l + \frac{1}{|\mathcal N(i)|} \sum_{j\in \mathcal N(i), j \neq i} \frac{\Rx_i^l - \Rx_j^l}{\lVert \Rx_i^l - \Rx_j^l \rVert}\phi_{x^{aa}}(\Bm_{ij}^{(aa)})\right) + \Rt\\
        &= \RR\Rx_i^{l+\sfrac{1}{2}}+\Rt 
        \label{eq:equivariance}
    \end{split}
\end{align}

In phase II of message passing, we input the updated physical node coordinates $\RR \Rx_i^{l+\sfrac{1}{2}} + \Rt$ from \cref{eq:equivariance} together with virtual node coordinates $\RR \Rz_j^l + \Rt$, both subjected to identical rotation and translation. Invariance of 
\cref{eq:message2,eq:sum2,eq:update2} can be deduced similarly to above, using the invariance properties of node features $\Bh^{l+\sfrac{1}{2}}_i$ and $\Bv^l_i$, edge features $a_{ij}$ and $d_{ij}$, and Euclidean distance (\cref{distance_inv}): 

\begin{align}
    \begin{split}
        \Bm_{ij}^{(av)} &= \Bphi_{e^{(av)}}(\Bh_i^{l+\sfrac{1}{2}}, \Bv_j^l, \lVert \RR\Rx_i^{l+\sfrac{1}{2}}+\Rt - \lbrack \RR\Rz_j^{l}+\Rt \rbrack \rVert^2, d_{ij}) \\
        &= \Bphi_{e^{(av)}}(\Bh_i^{l+\sfrac{1}{2}}, \Bv_j^l, \lVert \Rx_i^{l+\sfrac{1}{2}} - \Rz_j^l \rVert^2, d_{ij}) 
    \end{split}
\end{align}

Thus, the output virtual node features $\Bv_j^{l+1}$ are invariant to roto-translations of node coordinates. Note
that reflections are also covered by~\cref{eq:equivariance}
since the distance of two points does not change under
reflection.

Equivariance of output virtual node coordinates $\Rz_j^{l+1}$ follows analogously to \cref{eq:equivariance}:
\begin{align}
    \RR\Rz_j^l + \Rt + \frac{1}{N} \sum_{i=1}^N \frac{\RR\Rx_i^{l+\sfrac{1}{2}}+\Rt - \lbrack \RR\Rz_j^l +\Rt \rbrack}{\lVert \RR\Rx_i^{l+\sfrac{1}{2}}+\Rt - \lbrack \RR\Rz_j^l +\Rt \rbrack \rVert}\phi_{x^{av}}(\Bm_{ij}^{(av)})=\RR\Rz_j^{l+1}+\Rt
\end{align}

The same derivations of message invariance 

\begin{align}
    \begin{split}
        \Bm_{ij}^{(va)} &= \Bphi_{e^{(va)}}(\Bv_i^{l+1}, \Bh_j^{l+\sfrac{1}{2}},\lVert R\Rz_i^{l+1}+\Rt - \lbrack \RR\Rx_j^{l+\sfrac{1}{2}}+\Rt \rbrack \rVert^2, d_{ji}) \\
        &= \Bphi_{e^{(va)}}(\Bv_i^{l+1}, \Bh_j^{l+\sfrac{1}{2}}, \lVert \Rz_i^{l+1} - \Rx_j^{l+\sfrac{1}{2}} \rVert^2, d_{ji})
    \end{split}
\end{align}

 and coordinate equivariance 
 \begin{align}
     \RR\Rx_j^{l+\sfrac{1}{2}} + \Rt + \frac{1}{K} \sum_{i=1}^K \frac{\RR \Rz_i^{l+1} + \Rt - \lbrack \RR \Rx_j^{l+\sfrac{1}{2}} +\Rt \rbrack}{\lVert \RR \Rz_i^{l+1} + \Rt - \lbrack \RR \Rx_j^{l+\sfrac{1}{2}} +\Rt \rbrack \rVert} \phi_{x^{va}}(\Bm_{ij}^{(va)}) = \RR\Rx_j^{l+1} + \Rt 
 \end{align}
 
 can be applied to phase III (\crefrange{eq:message3}{eq:update3}), proving that invariance of feature representations $\Bh_j^{l+1}$ and equivariance of coordinates $\Rx_j^{l+1}$ holds true for physical nodes as well, thus, proving \cref{eqpropFormal}.

\end{proof}

\clearpage

\section{Experimental Settings}
\label{appsec:expGeneral}

\subsection{Datasets}
\label{appsec:experimental_settings}

\textbf{scPDB \citep{desaphy2015sc}} is a frequently utilized dataset for binding site prediction \citep{kandel2021puresnet, stepniewska2020improving}, encompassing both protein and ligand structures. We employed the 2017 release of scPDB in the training and validation. This release comprises 17,594 structures, 16,034 entries, 4,782 proteins, and 6,326 ligands. Structures were clustered based on their Uniprot IDs. From each cluster, protein structures with the longest sequences were selected, in alignment to the strategies used in \citet{kandel2021puresnet} and \citet{zhang2023equipocket}. 

(Source: \url{https://github.com/jivankandel/PUResNet/blob/main/scpdb_subset.zip})

\textbf{PDBbind \citep{2004The}} is a widely recognized dataset integral to the study of protein-ligand interactions. This dataset provides detailed 3D structural information of proteins, ligands, and their respective binding sites, complemented by rigorously determined binding affinity values derived from laboratory evaluations. For our work, we draw upon the v2020 edition, which is divided into two sets: the general set (comprising 14,127 complexes) and the refined set (containing 5,316 complexes). While the general set encompasses all protein-ligand interactions, only the refined set, curated for its superior quality from the general collection, is used in our experiments. 

(Source: \url{http://www.pdbbind.org.cn/download/PDBbind_v2020_refined.tar.gz})

\textbf{COACH420 \textnormal{and} HOLO4K} are benchmark datasets utilized for the prediction of binding sites, as originally detailed by~\citet{krivak2018p2rank}. Following the methodologies of~\citet{krivak2018p2rank, Mylonas2021, aggarwal2021deeppocket}, we adopt the so-called {\texttt{mlig}} subsets from each of these datasets, which encompass the significant ligands pertinent to binding site prediction. Note that the HOLO4K contains many multi-chain systems and complexes with multiple copies of the protein (see Section~\cref{appsec:holo4k_artifacts}),
such that this dataset's distribution is 
strongly differs from the other datasets. 

(Source: \url{https://github.com/rdk/p2rank-datasets})

For comprehensive data preparation across all datasets, solvent atoms were excluded and erroneous structures were removed.

\subsection{Hyperparameters and hyperparameter selection}
\label{appsec:hyp}

\Cref{tab:hyperparams} shows the evaluated hyperparameters. Bold indicates the parameters used in final model.

\begin{table}[!ht]
    \centering
    \begin{tabular}{lc}
         hyperparmater & considered and \textbf{selected} values \\ \midrule
         optimizer & \{\textbf{AdamW}, Adam  \} \\
         learning rate & $\{ \mathbf{0.001},0.0001 \}$  \\
         activation function & \{  \textbf{SiLU}, ReLU \} \\
         dimension of node features $D$ & $\{ 20, 30, \textbf{100} \}$ \\
         dimension of the messages $P$ & $\{ 40, 50, \textbf{100} \}$ \\
         number of message passing layers/steps $L$ & $\{ 2,3,4,\textbf{5}\}$ \\
        number of virtual nodes $K$ & $\{4,\textbf{8}\}$ \\
        Huber loss $\delta$ & $\{1\}$
    \end{tabular}
    \caption{A list of considered and selected hyperparameters.}
    \label{tab:hyperparams}
\end{table}

\subsection{Inference Time Evaluation}

Our model's prediction time depends on the protein size. We demonstrate this by reporting inference times for two proteins:
\begin{itemize}
    \item 3LPK protein (910 residues), 2.367 seconds
    \item 1ODI protein (1,410 residues), 3.809 seconds
\end{itemize}

\subsection{Fibonacci grid}
\label{appsec:fibonacci}
The Fibonacci grid \citep{Swinbank2006} offers a 
solution for evenly distributing points on a sphere. 
We chose this method to obtain the starting coordinates of virtual 
nodes for its simplicity and efficiency. To prevent virtual nodes from starting at identical locations in subsequent iterations, we apply random rotation of the sphere for each sample in every epoch.

\clearpage

\section{Initial experiment}
\label{appsec:intial}

\Cref{fig:dcc_dcc_gb} shows the training curves 
for a VN-EGNN during the development phase. VN-EGNN were
only trained to minimize 
the segmentation loss $\mathcal L_{\mathrm{segm}}$. 
Even in the absence of a the binding site 
center loss $\mathcal L_{\mathrm{bsc}}$, the virtual nodes 
tend to converge towards the actual binding site center. 
This finding inspired us to further refine the position of 
the virtual nodes by including it directly to the optimization objective, which further improved the results significantly.

\begin{figure}[ht]
    \centering
    \includegraphics[width=\textwidth]{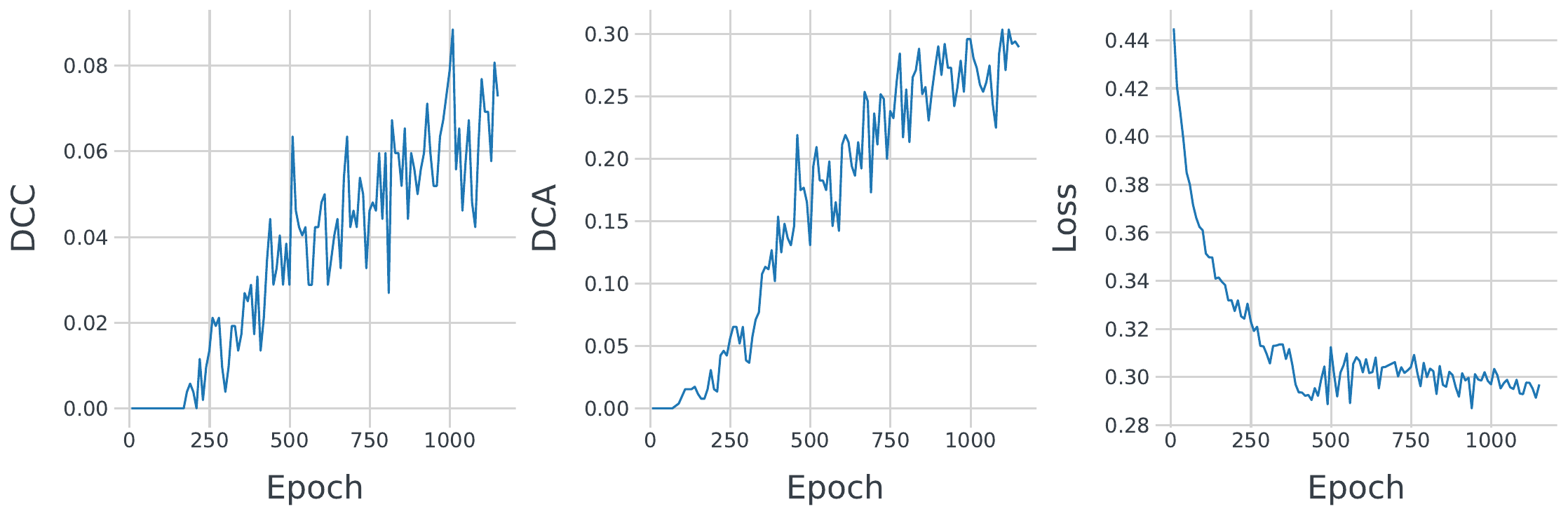}
    \caption{Validation curves of a VN-EGNN during development. Despite only being trained to minimize the segmentation loss, the virtual nodes converged towards the known binding sites. \textbf{Left}: DCC success rate during training. \textbf{Middle}: DCA success rate during training. \textbf{Right}: Segmentation loss during training.}
    \label{fig:dcc_dcc_gb}
\end{figure}

\clearpage

\section{Invariance and initial coordinates of virtual nodes.}
\label{sec:initial_coords}

As stated in Section~\ref{sec:properties_vnegnn}, we randomly rotate the grid of initial virtual nodes for
each sample during training in order to achieve
approximate invariance with respect to the initial 
coordinates of the virtual nodes. In an additional 
experiment, we evaluate how different relative positions
of the virtual nodes to the protein affect the method performance.
To investigate this, we randomly 
rotate the protein within the Fibonacci grid and perform
inference on the proteins of the benchmarking datasets.
The results of this experiment are presented in 
Table~\ref{tab:varRot}. Overall,
the results show that there is hardly any change of 
the DCC and DCA metric across different random rotations. 
Therefore we conclude that our method VN-EGNN achieves approximate
invariance to the initial coordinates of the virtual nodes
via data augmentation during training.

\vspace{1cm}

\begin{table}[ht]
\centering

\begin{tabular}{l c c }
\toprule
Dataset & DCC & DCA  \\  \midrule
COACH420 & 0.612(0.005) & 0.741(0.006)  \\
HOLO4Kd & 0.524(0.002) & 0.632(0.002)  \\
PDBbind2020 & 0.702(0.001) & 0.833(0.002) \\

\bottomrule
\end{tabular}

\caption{ Mean Performance at binding site identification in 
terms of DCC and DCA success rates together with their standard deviations (in parentheses). Means and standard deviations are across different random rotations of the Fibonacci grid.}
\label{tab:varRot}

\end{table}

}

\clearpage
\section{Visualizations}
\label{appsec:visualization}
\Cref{fig:protein_visualizations} shows exemplary model predictions visualized with Pymol. 

\begin{figure}[ht]
    \centering
    \includegraphics[width=1.0\textwidth]{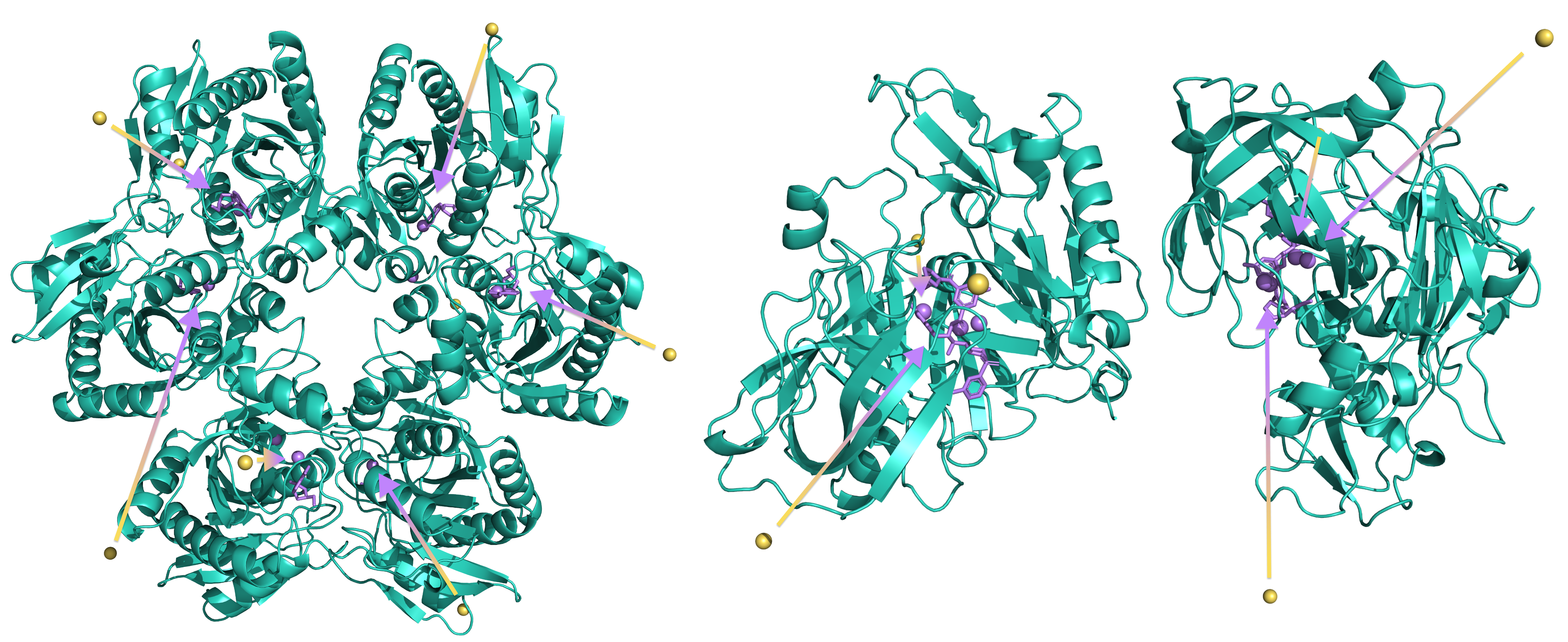}
    \caption{Examples of Detected Binding Sites: Visualization and Analysis.
We visualized two distinct proteins using Pymol, where the initial positions of the virtual nodes are represented by yellow spheres, were the violet spheres indicate the virtual nodes following $L$ message passing steps. The violet molecules indicate the position of the ligand as in the original PDB file. The arrows indicate the starting positions and the predicted positions of the virtual nodes. The visualization demonstrates that our model distributes the virtual nodes amongst various possible binding positions. The visualizations show the predicted positions after applying clustering as described in Section \ref{subsec:impl_details}. To simplify the visualization, not all initial positions of the virtual nodes are depicted in the figure.
    \textbf{Left}:1odi \textbf{Right}: 3lpk.
    \label{fig:protein_visualizations}}
\end{figure}

Further, we visualize the learned virtual node features, grouped by the corresponding protein’s target classification according to the ChEMBL \citep{gaulton2011chembl} database to analyze whether these representations contain relevant information about the protein/binding pocket (\Cref{fig:tsne_plot}).

\begin{figure}[ht]
    \centering
    \includegraphics[width=0.7\textwidth]{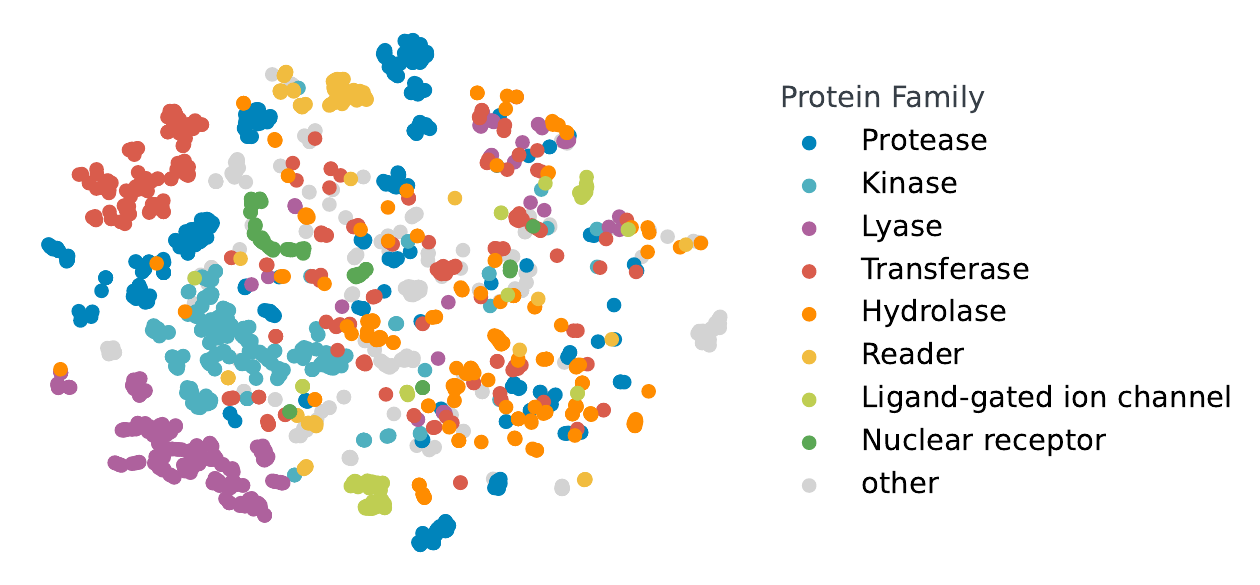}
    \caption{T-SNE embeddings of virtual node features of the best ranked pockets for each protein in the PDBbind2020 dataset colored by protein family according to ChEMBL. The eight largest protein classes are shown, remaining proteins are colored in grey.
    \label{fig:tsne_plot}}
\end{figure}

\clearpage

\section{Domain shift of the HOLO4K dataset}
\label{appsec:holo4k_artifacts}
The HOLO4K benchmark comprises a large set of protein complexes
and their annotated binding sites. HOLO4K has often been 
used as a benchmarking dataset \citep{krivak2018p2rank}, while 
it exhibits different characteristics than other
datasets, such as scPDB, COACH420 and PDBBind2020. The number
of chains per sample, i.e. PDB file, is larger than 
in these datasets (see ~\cref{fig:chain_count}), and also
the number of binding sites per entry is higher (see ~\cref{fig:bs_count}). HOLO4K contains many symmetric
units of large complexes which lead to these statistics.
Thus, for machine learning methods trained on 
scPDB, the HOLO4K dataset represents a difficult test case due
to the mentioned domain shifts. 

\vspace{1cm}

\begin{figure}[h]
    \centering
    \includegraphics[width=0.48\textwidth]{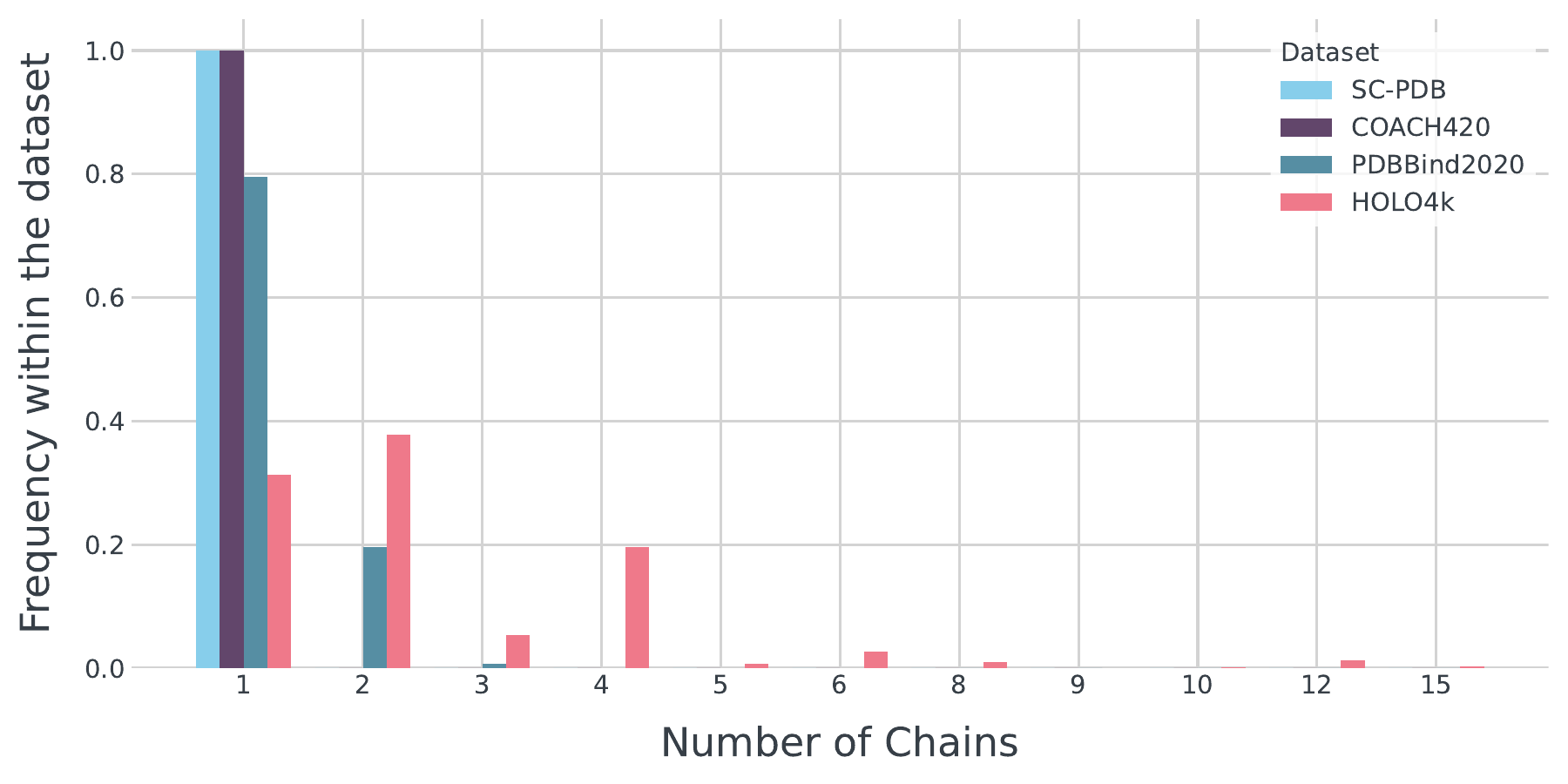}
    \includegraphics[width=0.48\textwidth]{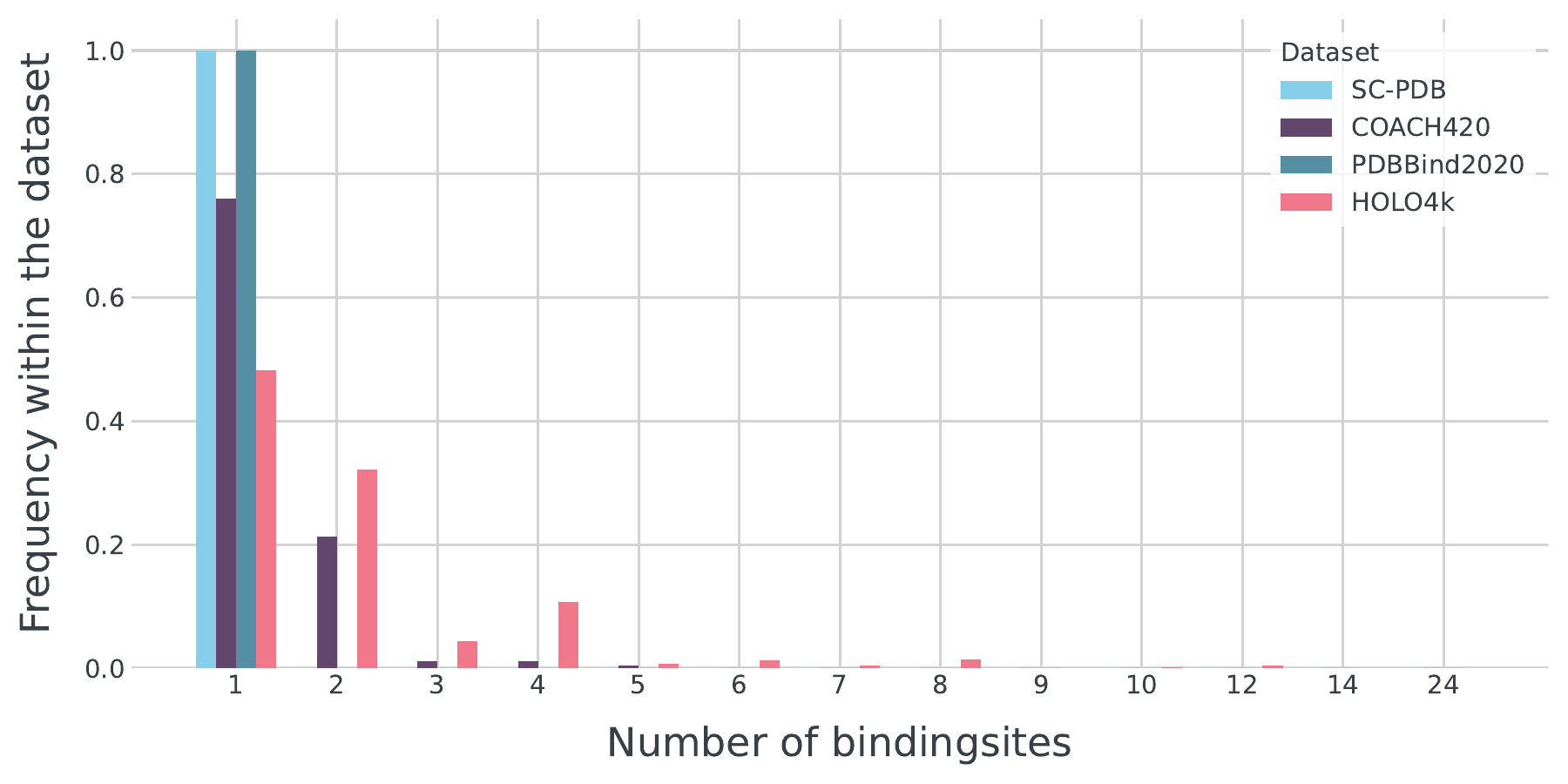}
    \caption{Histogram of the number of protein chains (left) and of the number of binding sites frequency (right) per sample for each of the datasets used 
    in this study. Note that the HOLO4K 
    dataset has highly different characteristics 
    compared to the other datasets and thus represents
    a strong domain shift for methods trained on
    scPDB. \label{fig:chain_count}  \label{fig:bs_count}}
\end{figure}

\clearpage

\section{Expressiveness of VN-EGNN}
\label{appsec:expressivity}

The expressive power of GNNs is often described in terms of their ability to distinguish non-isomorphic graphs. The Weisfeiler-Leman (WL) \citep{weisfeiler1968reduction} test, an iterative method to determine whether two attributed graphs are isomorphic, provides an upper bound to the expressiveness of GNNs. To extend the applicability of this framework to geometric graphs, \citet{Joshi2023} introduced the Geometric Weisfeiler-Leman test (GWL) which assesses whether two graphs are \textit{geometrically isomorphic}.

\textbf{Definitions} \citep{Joshi2023}: Two graphs $\mathcal{G}_1$ and $\mathcal{G}_2$ with node features $h_i^{\mathcal{G}_j}$ and coordinates $\Rx_i^{\mathcal{G}_j}$ for $j \in \{1,2\}$ are called \textit{geometrically isomorphic} if there exists an edge-preserving bijection $b: \mathcal{V} (\mathcal{G}_1) \rightarrow \mathcal{V}(\mathcal{G}_2)$ between their corresponding node indices $\mathcal{V} (\mathcal{G}_j)$, such that their geometric features are equivalent up to $E(n)$ group actions, i.e. global rotations/reflections $\BR$ and translations $\Rt$:

\begin{align}
     \left(h_{b(i)}^{\mathcal{G}_2}, \Rx_{b(i)}^{\mathcal{G}_2} \right) = \left(h_{i}^{\mathcal{G}_1}, \BR \Rx_{i}^{\mathcal{G}_1}+\Rt \right) \quad \forall i \in \mathcal{V}(\mathcal{G}_1).
\end{align}

Two graphs $\mathcal{G}_1$ and $\mathcal{G}_2$ are called \textit{$k$-hop distinct} if for all graph isomorphisms $b$, there is some node $i \in \mathcal{V}(\mathcal{G}_1), b(i) \in \mathcal{V}(\mathcal{G}_2)$ such that the corresponding $k$-hop neighborhood subgraphs $\mathcal{N}_i^{(\mathcal{G}_1,k)}$ and $\mathcal{N}_{b(i)}^{(\mathcal{G}_2,k)}$ are distinct. Otherwise, if $\mathcal{N}_i^{(\mathcal{G}_1,k)}$ and $\mathcal{N}_{b(i)}^{(\mathcal{G}_2,k)}$ are identical up to group actions for all $i \in \mathcal{V}(\mathcal{G}_1)$, we say $\mathcal{G}_1$ and $\mathcal{G}_2$ are \textit{$k$-hop identical}.

In addition to iteratively updating node colors depending on node features in the local neighborhood analogously to the WL test, GWL keeps track of $E(n)$-equivariant hash values of each node’s local geometry, i.e., distances to and angles between neighboring nodes. Thus, $k$ iterations of GWL are necessary and sufficient to distinguish any $k$-hop distinct, $(k-1)$-hop identical geometric graphs \citep{Joshi2023}. 

\begin{proposition}
    Any two geometrically distinct graphs $\mathcal{G}_1$ and $\mathcal{G}_2$, where the underlying attributed graphs are isomorphic, can be distinguished with one iteration of GWL by adding one virtual node connected to all other nodes.
\end{proposition}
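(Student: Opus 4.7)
The plan is to show that attaching a virtual node $v^*$ connected to every physical node collapses the notion of ``global'' geometric structure of $\mathcal{G}_1, \mathcal{G}_2$ into the ``local'' $1$-hop neighborhood of $v^*$, so that a single iteration of GWL at $v^*$ already encodes everything needed to separate geometrically distinct inputs.

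First, I would carefully set up the extended graphs $\mathcal{G}_1^+$ and $\mathcal{G}_2^+$ obtained by attaching a single virtual node to all existing nodes. The virtual node is assigned an initial color (feature) that is distinct from any physical node color but identical across the two graphs, and an $E(n)$-equivariant coordinate (for instance the centroid $\tfrac{1}{N}\sum_i \Rx_i$), so that the construction respects roto-translations. A short lemma-style observation is that $\mathcal{G}_1^+$ and $\mathcal{G}_2^+$ are geometrically isomorphic if and only if $\mathcal{G}_1$ and $\mathcal{G}_2$ are, because any $E(n)$-equivariant placement of $v^*$ is carried along by a putative isomorphism.

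Second, I would invoke the GWL update rule of \citet{Joshi2023} at the virtual node. After one iteration, the color $c_{v^*}^{(1)}$ is an injective $E(n)$-invariant hash of the multiset
\begin{align*}
\Bigl\{\!\!\Bigl\{\bigl(c_i^{(0)},\, \Rx_i - \Rx_{v^*}\bigr) : i \in \mathcal{N}(v^*)\Bigr\}\!\!\Bigr\}
\end{align*}
together with the pairwise geometric relations (distances and angles) between neighbors, which is exactly the information GWL needs to characterize a $1$-hop neighborhood up to $E(n)$. Because $v^*$ is connected to \emph{every} physical node, this $1$-hop neighborhood is the entire graph, so the hash $c_{v^*}^{(1)}$ characterizes the full geometric configuration of $\mathcal{G}_j$ up to a global $E(n)$ action.

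Finally, I would conclude by contraposition: if the single-iteration hashes coincide, $c_{v^*}^{(1)}(\mathcal{G}_1^+) = c_{v^*}^{(1)}(\mathcal{G}_2^+)$, then by the characterization above there is a global $E(n)$ transformation mapping the geometric data of $\mathcal{G}_1$ to that of $\mathcal{G}_2$ while preserving the (isomorphic) attributed graph structure, contradicting geometric distinctness. Hence one iteration of GWL on the virtual-node-augmented graph suffices. The main obstacle is the second step: one must appeal precisely to the GWL geometric aggregator from \citet{Joshi2023}, which hashes not merely distances from $v^*$ (which would lose angular information) but the full set of pairwise geometric invariants within the neighborhood. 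This is exactly what makes the $1$-hop local hash at $v^*$ injective up to $E(n)$ on whole-graph configurations, and where the argument stands or falls.
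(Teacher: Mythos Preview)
Your argument is correct, and it follows a genuinely different and arguably cleaner route than the paper's proof. The core idea is shared: once a virtual node $v^*$ is connected to every physical node, its $1$-hop neighborhood is the entire graph, so one GWL iteration at $v^*$ sees everything. The difference lies in how $v^*$ is positioned and how the conclusion is drawn. The paper first \emph{aligns} $\mathcal{G}_1$ and $\mathcal{G}_2$ by overlaying a matching $(k{-}1)$-hop subgraph and then places $v^*$ at identical absolute coordinates in both aligned copies; it then argues directly that, by $k$-hop distinctness, some physical node sits at a different relative position, forcing the geometric hash at $v^*$ to differ. This requires a case split on $k$ and a somewhat delicate alignment step that the paper itself flags as ``not trivial.'' Your approach sidesteps all of this: placing $v^*$ at the centroid is an intrinsic $E(n)$-equivariant construction depending only on each graph separately, and the auxiliary observation that $\mathcal{G}_1^+ \cong \mathcal{G}_2^+$ iff $\mathcal{G}_1 \cong \mathcal{G}_2$ lets you reduce immediately to the known fact that one GWL iteration separates $1$-hop distinct graphs. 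What you gain is a proof that is shorter, avoids the alignment ambiguity, and makes the role of equivariant placement explicit. What the paper's version buys is a slightly more hands-on picture of \emph{why} the hash differs (a concrete deviating node), though at the cost of the extra machinery. Your identification of the crux --- that GWL's $1$-hop aggregator must hash full pairwise geometric invariants within the neighborhood, not merely radial distances from $v^*$ --- is exactly right and is the one place the argument genuinely depends on the specifics of the GWL definition in \citet{Joshi2023}.
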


\begin{proof}
For $1$-hop distinct graphs one iteration of GWL suffices to distinguish them even without virtual nodes and, thus, the proposition holds. 

Now, we assume that $\mathcal{G}_1$ and $\mathcal{G}_2$ are $k$-hop distinct and $(k-1)$-hop identical for any $k>1 \in \mathbb{N}$ and place one virtual node connected to all other nodes in an equivalent position in both graphs. 

Note that finding equivalent virtual node positions is not trivial, as there is no straight-forward way to spatially align the two graphs. However, for any $(k-1)$-hop sub-graph in $\mathcal{G}_1$ there exists an identical sub-graph in $\mathcal{G}_2$, such that the neighborhood structure between matching sub-graphs is preserved. We align the two graphs in space by overlaying one such pair of $(k-1)$-hop sub-graphs (consisting of more than two nodes that are not arranged in a straight line) and position the virtual node in the same coordinates in both aligned graphs.

Since the virtual node is connected to each node in the graph, its $1$-hop neighborhood and therefore the receptive field of the first GWL iteration contains the entire graph. Due to the $k$-hop distinctness of the graphs, there exists at least one node for which the geometric orientation relative to the matched subgraph deviates between $\mathcal{G}_1$ and $\mathcal{G}_2$. Thus, the hash values corresponding to the virtual nodes' geometric information differ, and the graphs can be distinguished by only one iteration of GWL.
\end{proof}

As $k$ iterations of GWL act as an upper bound on the expressiveness of a $k$-layer geometric GNN, we propose that one layer of VN-EGNN is sufficient to distinguish two $k$-hop distinct graphs while without virtual nodes $k$ EGNN layers are necessary to complete the same task.

We demonstrate this on the example of $n$-chain geometric graphs, where each pair of graphs comprises $n$ nodes arranged in a line and two end points with distinct orientations (\Cref{fig:k_chains}). These graphs are ($\lfloor{\frac{n}{2}} \rfloor + 1$)-hop distinct and should therefore be distinguishable by ($\lfloor{\frac{n}{2}} \rfloor + 1$) EGNN layers or ($\lfloor{\frac{n}{2}} \rfloor + 1$) iterations of GWL.

\begin{figure}[ht]
    \centering
    \includegraphics[width=0.99\textwidth]{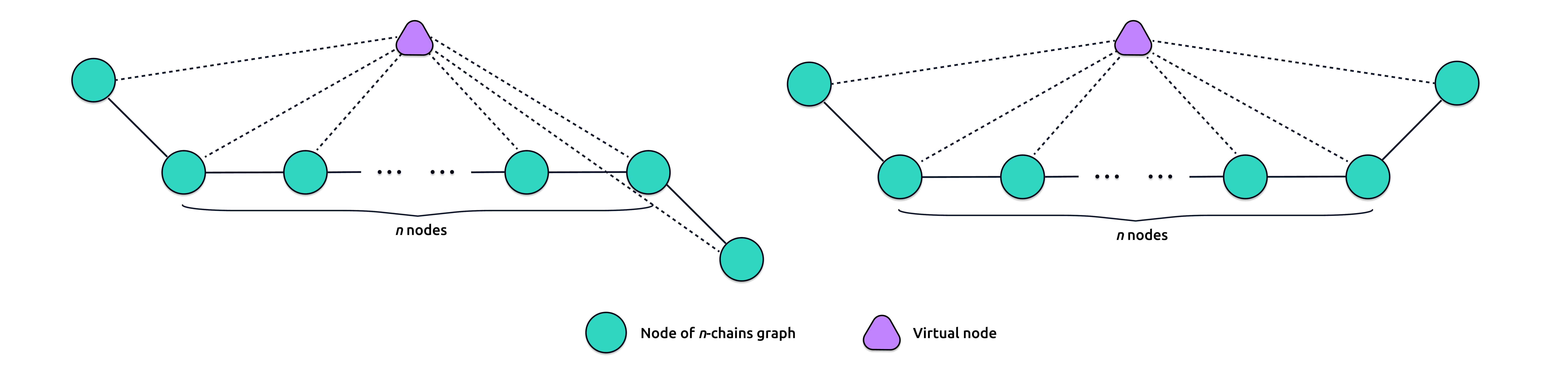}
    \caption{A pair of $n$-chain geometric graphs consisting of $n$ nodes arranged in a line and two end points with opposite orientations. Without the addition of a virtual node, these graphs are ($\lfloor{\frac{n}{2}} \rfloor + 1$)-hop distinct.}
    \label{fig:k_chains}
\end{figure}

We trained EGNNs with an increasing number of layers to classify $4$-chain graphs with and without the addition of a virtual node. The virtual node was placed in an equivalent position for both graphs, which was selected randomly on a sphere centered at the midpoint of the $n$-chain and surrounding each graph. Note that for the experiments including virtual nodes, we did not use the heterogeneous message passing scheme as described in \Cref{subsec:vn_message_passing}, but apply the EGNN to the entire graph, including virtual nodes, at once.

The results shown in \Cref{tab:k_chains} demonstrate that, as expected, $3$ layers of EGNN are necessary to distinguish the $4$-chain graphs while after adding a virtual node, one iteration is sufficient for correct classification, indicating the increased expressiveness of VN-EGNN.
Although we used the setting of \citet{Joshi2023}, we could not 
reproduce their finding that $6$ EGNN layers are necessary to solve this
task, which they explained with possible oversmoothing or oversquashing effects. The differences might arise from the use of different features dimensions, which is why we include results for $5$ different feature dimensions.

\begin{table}[ht]
    \centering
    \resizebox{0.99\textwidth}{!}{
    \begin{tabular}{lccccccccc}
        \toprule 
   &  Dim. & 1 Layer & 2 Layers  & 3 Layers & 4 Layers & 5 Layers  & 6 Layers & 7 Layers  & 8 Layers\\ \midrule
  & 8& 50.0 $\pm$ 0.0& 50.0 $\pm$ 0.0& 50.0 $\pm$ 0.0& 98.0 $\pm$ 9.8& 94.0 $\pm$ 16.2& 93.0 $\pm$ 17.3& 99.5 $\pm$ 5.0& 99.5 $\pm$ 5.0\\ 
  & 16& 50.0 $\pm$ 0.0& 50.0 $\pm$ 0.0& 86.0 $\pm$ 22.4& 97.5 $\pm$ 10.9& 99.5 $\pm$ 5.0& 99.5 $\pm$ 5.0& 99.5 $\pm$ 5.0& 100.0 $\pm$ 0.0\\ 
 \bf EGNN & 32& 50.0 $\pm$ 0.0& 50.0 $\pm$ 0.0& 56.5 $\pm$ 16.8& 50.0 $\pm$ 0.0& 50.0 $\pm$ 0.0& 96.5 $\pm$ 12.8& 99.0 $\pm$ 7.0& 93.5 $\pm$ 16.8\\ 
  & 64& 50.0 $\pm$ 0.0& 50.0 $\pm$ 0.0& 100.0 $\pm$ 0.0& 99.0 $\pm$ 7.0& 100.0 $\pm$ 0.0& 99.0 $\pm$ 7.0& 100.0 $\pm$ 0.0& 100.0 $\pm$ 0.0\\ 
  & 128& 50.0 $\pm$ 0.0& 50.0 $\pm$ 0.0& 96.5 $\pm$ 12.8& 98.5 $\pm$ 8.5& 95.0 $\pm$ 15.0& 99.5 $\pm$ 5.0& 99.5 $\pm$ 5.0& 99.5 $\pm$ 5.0\\ 
        \midrule
  & 8& 65.5 $\pm$ 23.1& 50.0 $\pm$ 0.0& 84.5 $\pm$ 23.1& 92.5 $\pm$ 17.9& 64.0 $\pm$ 22.4& 97.0 $\pm$ 11.9& 86.5 $\pm$ 23.3& 97.5 $\pm$ 10.9\\ 
  & 16& 86.0 $\pm$ 23.5& 95.0 $\pm$ 15.0& 98.5 $\pm$ 8.5& 99.5 $\pm$ 5.0& 99.5 $\pm$ 5.0& 98.0 $\pm$ 9.8& 99.5 $\pm$ 5.0& 100.0 $\pm$ 0.0\\ 
 \bf VN-EGNN & 32& 95.0 $\pm$ 15.0& 100.0 $\pm$ 0.0& 99.5 $\pm$ 5.0& 99.5 $\pm$ 5.0& 100.0 $\pm$ 0.0& 100.0 $\pm$ 0.0& 100.0 $\pm$ 0.0& 100.0 $\pm$ 0.0\\ 
  & 64& 97.5 $\pm$ 10.9& 100.0 $\pm$ 0.0& 99.5 $\pm$ 5.0& 99.5 $\pm$ 5.0& 99.0 $\pm$ 7.0& 100.0 $\pm$ 0.0& 100.0 $\pm$ 0.0& 99.5 $\pm$ 5.0\\ 
  & 128& 99.0 $\pm$ 7.0& 99.5 $\pm$ 5.0& 99.5 $\pm$ 5.0& 99.0 $\pm$ 7.0& 99.5 $\pm$ 5.0& 99.5 $\pm$ 5.0& 99.0 $\pm$ 7.0& 99.0 $\pm$ 7.0 \\
        \bottomrule
    \end{tabular}}
    \caption{Classification accuracy of EGNNs with and without virtual nodes and increasing node embedding dimensions on $4$-chain geometric graphs. The standard deviation across 100 training re-runs is indicated with $\pm$ and column "Dim" indicates the used node feature dimension. Note that VN-EGNN can distinguish these graphs already with one message passing layer (see columns "1 Layer" and "2 Layers").}
    \label{tab:k_chains}
\end{table}

\end{document}